\documentclass[twoside,11pt]{article}

%

%
%
%

\usepackage{jmlr2e}
\usepackage{subfigure}
\usepackage{booktabs}
\usepackage{multirow}
\usepackage{algorithm}
\usepackage{algorithmic}
\usepackage{amsmath}
\usepackage{cases}
\usepackage{amsfonts}


\newtheorem{assumption}{Assumption}

\usepackage{color}




\ShortHeadings{Adaptive Deep Learning for Entity Resolution by Risk Analysis}{Qun Chen et al.}
\firstpageno{1}

\begin{document}
	
	\title{Adaptive Deep Learning for Entity Resolution by Risk Analysis}
	
	\author{\name Zhaoqiang Chen \email chenzhaoqiang@mail.nwpu.edu.cn \\
		\name Qun Chen \email chenbenben@nwpu.edu.cn \\
		\name Youcef Nafa \email youcef.nafa@mail.nwpu.edu.cn \\
		\name Tianyi Duan \email tianyiduan@mail.nwpu.edu.cn \\
		\name Wei Pan \email panwei1002@nwpu.edu.cn \\
		\name Lijun Zhang \email zhanglijun@nwpu.edu.cn \\
		\name Zhanhuai Li \email lizhh@nwpu.edu.cn \\
		\addr School of Computer Science, Northwestern Polytechnical University\\
		Key Laboratory of Big Data Storage and Management, Northwestern Polytechnical University, Ministry of Industry and Information Technology\\
		Xi'an, China}
	
	\editor{Anonymous}
	
	\maketitle
	
	\begin{abstract}
		The state-of-the-art performance on entity resolution (ER) has been achieved by deep learning. However, deep models are usually trained on large quantities of accurately labeled training data, and can not be easily tuned towards a target workload. Unfortunately, in real scenarios, there may not be sufficient labeled training data, and even worse, their distribution is usually more or less different from the target workload even when they come from the same domain. 
		
		To alleviate the said limitations, this paper proposes a novel risk-based approach to tune a deep model towards a target workload by its particular characteristics. Built on the recent advances on risk analysis for ER, the proposed approach first trains a deep model on labeled training data, and then fine-tunes it by minimizing its estimated misprediction risk on unlabeled target data. Our theoretical analysis shows that risk-based adaptive training can correct the label status of a mispredicted instance with a fairly good chance. 
		We have also empirically validated the efficacy of the proposed approach on real benchmark data by a comparative study. Our extensive experiments show that it can considerably improve the performance of deep models. Furthermore, in the scenario of distribution misalignment, it can similarly outperform the state-of-the-art alternative of transfer learning by considerable margins. Using ER as a test case, we demonstrate that risk-based adaptive training is a promising approach potentially applicable to various challenging classification tasks. 
	\end{abstract}
	
	\begin{keywords}
		Risk Analysis, Deep Learning, Adaptation
	\end{keywords}
	
	\section{Introduction}
\label{sec:intro}

  Extensively studied in the literature~\citep{christen2008automatic}, ER is an important problem for data integration. Its purpose is to identify the equivalent records that refer to the same real-world entity.  Considering the running example shown in Figure~\ref{fig:runningexample}, ER needs to match the paper records between two tables, $R_1$ and $R_2$. A pair of $<r_{1i},r_{2j}>$, in which $r_{1i}$ and $r_{2j}$ denote a record in $R_1$ and $R_2$ respectively, is called an \emph{equivalent} pair if and only if $r_{1i}$ and $r_{2j}$ refer to the same paper; otherwise, it is called an \emph{inequivalent} pair. In this example, $r_{11}$ and $r_{21}$ are \emph{equivalent} while $r_{11}$ and $r_{22}$ are \emph{inequivalent}. ER can be considered as a binary classification problem tasked with labeling record pairs as \emph{matching} or \emph{unmatching}. Therefore, various learning models have been proposed for ER~\citep{christen2008automatic}. As many other classification tasks (e.g. image and speech recognition), the state-of-the-art performance on ER has been achieved by deep learning~\citep{joty2018distributed,mudgal2018deep,nie2019deep,zhao2019auto,li2020deep}.

\begin{figure}
	\begin{center}
		\centerline{\includegraphics[width=\linewidth]{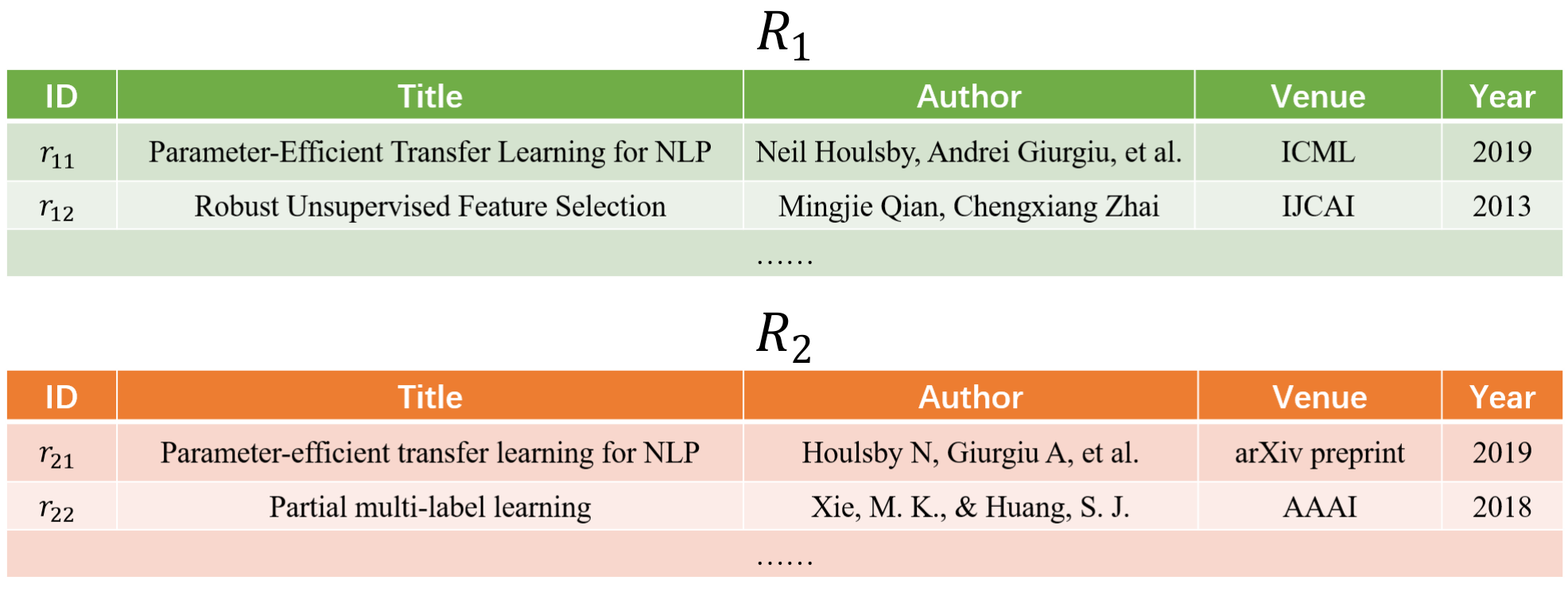}}
		\caption{An ER running example.}
	\end{center}
	\label{fig:runningexample}
\end{figure}

\begin{figure}
	\begin{center}
		\centerline{\includegraphics[width=\linewidth]{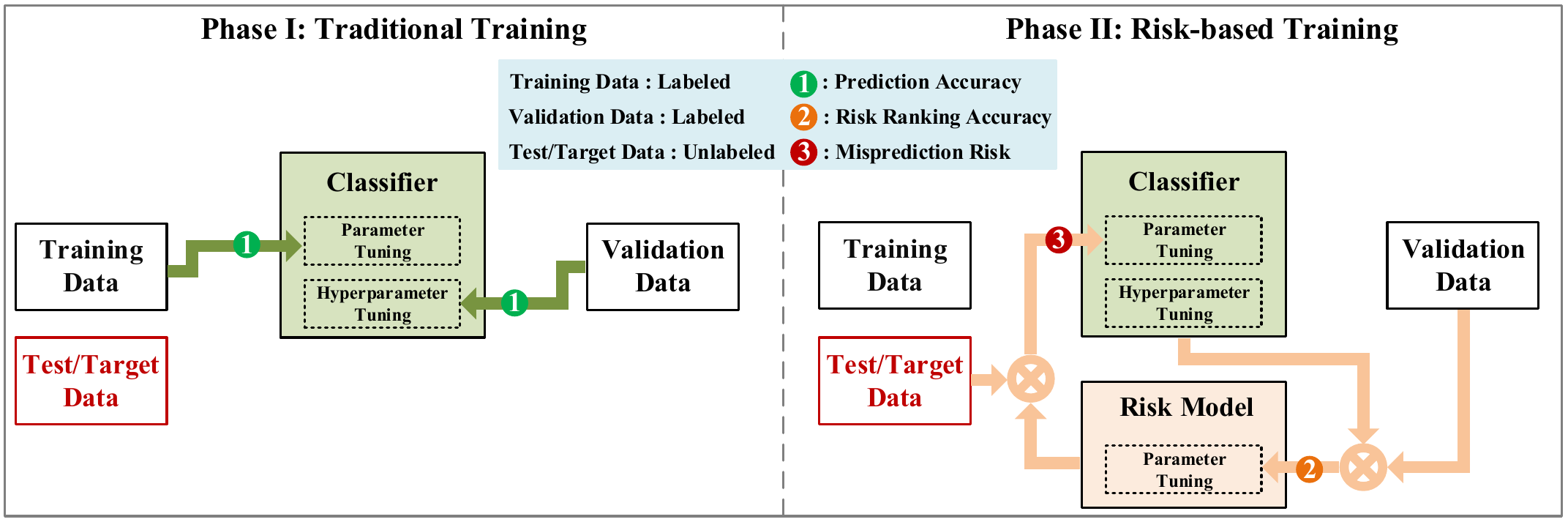}}
		\caption{Risk-based Adaptive Training.}
	\end{center}
	\label{fig:hybrid-train}
\end{figure}

However, the efficacy of Deep Neural Network (DNN) models depends on large quantities of accurately labeled training data, which may not be readily available in real scenarios. Furthermore, in the typical setting of deep learning, model parameters are tuned on \emph{labeled training data} in a way ensuring that the resulting classifier's predictions on the training instances are most consistent with their ground-truth labels. The trained classifier is then applied on a target workload. It can be observed that the typical process of model training does not involve the unlabeled data in a target workload, even though to alleviate the over-fitting problem, labeled validation data are usually provided as the proxy workload of a target task and leveraged for hyperparameter tuning. Theoretically, the efficacy of this approach is based on the assumption that training and target data are {\bf i}ndependently and {\bf i}dentically {\bf d}istributed (the {\bf i.i.d} assumption). Unfortunately, in real scenarios, even when training and target data come from the same domain, the aforementioned assumption may not hold due to: 1) Training data are not sufficient to fully represent the statistical characteristics of a target workload; 2) Even though training data are abundant, its inherent distribution may be to some extent different from the target workload. Therefore, it is not unusual in real scenarios that a well trained deep model performs unsatisfactorily on a target workload.

Many adaptation approaches have been proposed to alleviate distribution misalignment, most notably among them \emph{transfer learning}~\citep{DBLP:journals/tkde/PanY10,DBLP:conf/icml/WeiZHY18,DBLP:conf/icml/HoulsbyGJMLGAG19} and \emph{adaptive representation learning}~\citep{DBLP:conf/cvpr/LongD0SGY13,DBLP:conf/icml/LongC0J15,DBLP:conf/icml/0002CZG19,wu2019ace,kim2019diversify}. Transfer learning aimed to adapt a model learned on the training data in a source domain to a target domain. Similarly, adaptive representation learning, which was originally proposed for image classification, mainly studied how to learn domain-invariant features shared among diversified domains. Unfortunately, distribution misalignment remains very challenging. One of the reasons is that the existing approaches focused on how to extract and leverage the common knowledge shared between a source task and a target task; however, they have limited capability to tune deep models towards a target task by its particular characteristics.

It has been well recognized that in real scenarios, with or without adaptation, a well-trained classifier may not be accurate in predicting the labels of the instances in a target workload. Even worse, it may provide high-confidence predictions which turn out to be wrong~\citep{DBLP:journals/corr/GoodfellowSS14}. Such prediction uncertainty has emerged as a critical concern to AI safety~\citep{amodei2016concrete}. Therefore, various techniques~\citep{hendrycks2016baseline,chen2018improving,jiang2018trust,hendrycks2019deep,chen2019towards} have been proposed for the task of risk analysis, which aims at estimating the misprediction risk of a classifier when applied to a certain workload.

Since risk analysis can measure the misprediction risk of a classifier on unlabeled data, it provides classifier training with a viable way to adapt towards a particular workload. Hence, we propose a risk-based approach to enable adaptive deep learning for ER in this paper.
It is noteworthy that the recently proposed LearnRisk~\citep{chen2019towards} is an interpretable and learnable risk analysis approach for ER. Compared with the simpler alternatives, LearnRisk is more interpretable and can identify mislabeled pairs with considerably higher accuracy. Therefore, we build the solution of adaptive deep training upon LearnRisk in this paper. 

 The proposed approach is shown in Figure 2, in which \emph{test data} represent a target workload. 
It consists of two phases, the phase of \emph{traditional training} followed by the phase of \emph{risk-based training}. In the first phase, a deep model is trained on labeled training data in the traditional way. In the second phase, it is further tuned to minimize the misprediction risk on unlabeled target data. 
The main contributions of this paper are as follows:
\begin{itemize}
	\item We propose a novel risk-based approach to enable adaptive deep learning. 
	\item We present a solution of adaptive deep learning for ER based on the proposed approach.
	\item We theoretically analyze the performance of the proposed solution for ER. Our analysis shows that risk-based adaptive training can correct the label status of a mispredicted instance with a fairly good chance.
	\item We empirically validate the efficacy of the proposed approach on real benchmark data by a comparative study.
\end{itemize}

The rest of this paper is organized as follows: Section~\ref{sec:related} reviews related work. Section~\ref{sec:preliminaries} presents the preliminaries. Section~\ref{sec:risktrain} presents the adaptive training approach and the theoretical results. Section~\ref{sec:experiments} describes the empirical evaluation results. Finally, Section~\ref{sec:conclusion} concludes this paper.

	\section{Related Work}
\label{sec:related}

We review related work from three mutually orthogonal perspectives: entity resolution, model training and ensemble learning.

\vspace{0.1in}
\hspace{-0.25in}\textbf{Entity Resolution}. 
ER plays a key role in data integration and has been extensively studied in the literature~\citep{christen2012data,christophides2015entity,elmagarmid2007duplicate}. ER can be automatically performed based on rules~\citep{fan2009reasoning,li2015rule,singh2017generating}, probabilistic theory~\citep{fellegi1969theory,singla2006entity} and machine learning models~\citep{christen2008automatic,cochinwala2001efficient,kouki2017collective,sarawagi2002interactive}. The state-of-the-art performance on ER has been achieved by deep learning~\citep{joty2018distributed,mudgal2018deep,nie2019deep,zhao2019auto,li2020deep}. Specifically, a deep learning architecture template for ER has been provided in \citep{mudgal2018deep}. More recently, in the following work, \citet{li2020deep} presented an improved solution based on pre-trained Transformer-based language models, e.g., BERT. It is noteworthy that this paper does not attempt to propose a new DNN model for ER. It instead focuses on how to tune deep models towards a target task via risk analysis. Therefore, the existing work on deep learning for ER is orthogonal to ours. In principle, our proposed approach can work with any DNN for ER.

ER remains very challenging in real scenarios due to prevalence of dirty data. Therefore, there is a need for \emph{risk analysis}, alternatively called \emph{trust scoring} and \emph{confidence ranking} in the literature. The proposed solutions ranged from those simply based on the model's output probabilities to more sophisticated interpretable and learnable ones \citep{zhang2014predicting,hendrycks2016baseline,jiang2018trust,chen2019towards}. Most recently,  we proposed an interpretable and learnable framework for ER, LearnRisk~\citet{chen2019towards}, which is
able to construct a dynamic risk model tuned towards a specific workload. In our following work ~\citet{NAFA2022107729}, we also proposed to actively select training data for ER models based on the results of risk analysis. In this paper, we leverage the results of risk analysis to fine-tune deep models
toward a specific workload.

\vspace{0.1in}
\hspace{-0.25in}\textbf{Model Training}.
A common challenge for model training is the \emph{over-fitting}, which refers to the phenomenon that a model well tuned on training data performs unsatisfactorily on target data. 
The de-facto standard approach to alleviate \emph{over-fitting} is by leveraging validation data for hyperparameter tuning and model selection (e.g. cross validation)~\citep{kohavi1995study}. 
Another noteworthy complementary technique is the \emph{regularization}~\citep{evgeniou2000regularization,DBLP:conf/nips/BaldiS13,neyshabur2017exploring,DBLP:conf/iclr/ZhangBHRV17}, which aims to reduce the number of model parameters to a manageable level. Both hyperparameter tuning and model selection are to a large extent orthogonal to model training considered in this paper. These works are therefore orthogonal to our work. 

   The classical way to alleviate the insufficiency of labeled training data, semi-supervised learning, has been extensively studied in the literature~\citep{zhu2005semi,DBLP:conf/cvpr/IscenTAC19}. However, semi-supervised learning investigated how to leverage unlabeled training data, which usually have a similar distribution with labeled training data. It is obvious that the techniques for semi-supervised learning can be straightforwardly incorporated into the pre-training phase of our proposed approach. They are therefore orthogonal to our work. Another way to reduce labeling cost is by active learning~\citep{settles2012active,ducoffe2018adversarial}. While active learning focused on how to select training data for labeling,  we focused on how to adapt a model towards a target workload provided with a set of training data. Therefore, active learning is also orthogonal to our work. 

\vspace{0.1in}
\hspace{-0.25in}\textbf{Ensemble Learning.}
To address the limitations of a single classifier, ensemble learning has been extensively studied in the literature~\citep{zhou2009ensemble,sagi2018ensemble}. Ensemble learning first trains multiple classifiers using different training data (e.g., bagging~\citep{breiman1996bagging}) or different information in the same training data (e.g., boosting~\citep{schapire1990strength}), and then combine probably conflicting predictions to arrive at a final decision. While LearnRisk uses the ensemble of risk features to measure misprediction risk, risk-based adaptive training is fundamentally different from ensemble learning due to the following two reasons: 1) unlike the traditional labeling functions, \emph{LearnRisk} aims to estimate an instance's misclassification risk as predicted by a classifier; 2) more importantly, the ensemble approach trains multiple models and tunes predictions based on training data; in contrast, risk-based adaptive training trains only one model, and tunes the model towards a target workload. On the other hand, since ensemble learning trains models based on training data, the work on ensemble learning is orthogonal to ours. In principle, our proposed approach can also work with an ensemble learning model. However, how to tune an ensemble model via risk measure however requires further investigation.

	\section{Preliminaries}
\label{sec:preliminaries}
In this section, we first define the ER classification task and then introduce the state-of-the-art risk analysis technique for ER, LearnRisk.

\subsection{Task Statement}
\label{sec:task}
This paper considers ER as a binary classification problem. A classifier needs to label every unlabeled pair as {\em matching} or {\em unmatching}. As usual, we measure the quality of an ER solution by the standard metric of \emph{F1}, which is a combination of \emph{precision} and \emph{recall} as follows
\begin{equation}
 F1=\frac{2\times precision\times recall}{precision + recall}.
\end{equation}

\begin{table}[ht]
	\caption{The Frequently Used Notations}
	\label{tab:notations}
	\begin{center}
		\begin{tabular}{|l|l|}
			\toprule
			Notation & Description \\ \hline
			\midrule
			$D$ & an ER workload \\ \hline
			$D^s, D^v, D^t$ & subsets of $D$, corresponding to training set, validation set and test set \\\hline
			$d_i$ & an instance pair in $D$ \\ \hline
			$\mathbf{x}_i$ & the feature representation vector of $d_i$ \\ \hline
			$y_i$ & the label of $d_i$ \\ \hline
			$\mu_{d_i}$ & the expectation of equivalence probability of $d_i$ \\ \hline
			$\sigma_{d_i} (resp. \sigma^2_{d_i})$ & the standard deviation (resp. variance) of equivalence probability of $d_i$ \\ \hline
			$f_i$ & a risk feature \\ \hline
			$w_i$ & the feature weight of $f_i$  \\
			\bottomrule
		\end{tabular}
	\end{center}
\end{table}

For presentation simplicity, we summarize the frequently used notations in Table~\ref{tab:notations}. As usual, we suppose that an ER task $D$ consists a set of labeled training data, $D^s=\{(\mathbf{x}^s_i, y^s_i)|i\}$, where each $(\mathbf{x}^s_i, y^s_i)$ denotes a training instance with its feature representation $\mathbf{x}^s_i$ and ground-truth label $y^s_i$, a set of labeled validation data, $D^v=\{(\mathbf{x}^v_i, y^v_i)|i\}$, and a set of unlabeled test data, $D^t=\{(\mathbf{x}^t_i, ?)|i\}$. Note that $D^t$ denotes the target workload, and $D^v$ is provided as a proxy workload of $D^t$. Formally, we define the classification task of ER as
 
\begin{definition}
 \textbf{[ER Classification Task]}. Given an ER workload $D$ consisting of $D^s$, $D^v$ and $D^t$, the task aims to learn an optimal classifier, $C_*$, based on $D$ such that the performance of $C_*$ on $D^t$ as measured by the metric of $F1$, or $F1(C_*,D^t)$, is maximized. 
\end{definition}

\subsection{Risk Analysis for ER: LearnRisk}
\label{sec:learnrisk}

The risk analysis pipeline of LearnRisk operates in three main steps: \textit{risk feature generation} followed by \textit{risk model construction} and finally \textit{risk model training}.

\subsubsection{Risk feature generation}
The step automatically generates risk features in the form of interpretable rules based on one-sided decision trees. The algorithm ensures that the resulting rule-set is both discriminative, i.e, each rule is highly indicative of one class label over the other; and has a high data coverage, i.e, its validity spans over a subpopulation of the workload. As opposed to classical settings where a rule is used to label pairs to be equivalent or inequivalent, a risk rule feature focuses exclusively on one single class. Consequently, risk features act as indicators of the cases where a classifier's prediction goes against the knowledge embedded in them. An example of such rules is:
\begin{equation} \label{eq:rule-example}
  r_i[Year]\neq r_j[Year] \rightarrow inequivalent(r_{i}, r_{j}),
\end{equation}	
where $r_i$ denotes a record and $r_i[Year]$ denotes $r_i$'s attribute value at $Year$. 
With this knowledge, a pair predicted as \emph{matching} whose two records have different publication years is assumed to have a high risk of being mislabeled.

\subsubsection{Risk model construction}
Once high-quality features have been generated, the latter are readily available for the risk model to make use of, allowing it to be able to judge a classifier's outputs backing up its decisions with human-friendly explanations. To achieve this goal, LearnRisk, drawing inspiration from investment theory, models each pair's equivalence probability distribution (portfolio reward) as the aggregation of the distributions of its compositional features (stocks rewards). 

Practically, the equivalence probability of a pair $d_i$ is modeled by a random variable $p_i$ that follows a normal distribution $\mathcal{N}(\mu_i, \sigma_i^2)$, where $\mu_i$ and $\sigma_i^2$ denote expectation and variance respectively. Given a set of $m$ risk features ${f_1,f_2,...,f_m}$, let ${w_1,w_2,...,w_m}$ denote their corresponding weights. Suppose that $\mathbf{\mu}_F=$ $[\mu_{f_1},$ $\mu_{f_2},$ $\ldots, \mu_{f_m}]^T$ and $\mathbf{\sigma}^2_F=$ $[\sigma_{f_1}^2,$ $\sigma_{f_2}^2,$ $\ldots, \sigma_{f_m}^2]^T$ represent their corresponding expectation and variance vectors respectively, such that $\mathcal{N}(\mu_{f_j},\sigma_{f_j}^2)$ denotes the equivalence probability distribution of the feature $f_j$. Accordingly, the distribution parameters for $d_i$ are estimated by: 
\begin{equation}
  \mu_i = \mathbf{z}_i (\mathbf{w} \circ \mathbf{\mu}_F), 
\end{equation}
and
\begin{equation}
  \sigma_i^2 = \mathbf{z}_i (\mathbf{w} \circ \mathbf{w} \circ \mathbf{\sigma}^2_F), 
\end{equation}
Where $\circ$ represents the element-wise product and $\mathbf{z}_i$ is a one-hot feature vector.

\emph{Note that besides one-sided decision rules, LearnRisk also incorporates classifier output as one of the risk features, which is referred to as the DNN risk feature in this paper.} Provided with the equivalence distribution $p_i$ for $d_i$, LearnRisk measures its risk by the metric of Value-at-Risk (VaR)~\citep{tardivo2002value}, which can effectively capture the fluctuation risk of label status. Provided with a confidence level of $\theta$, the metric of VaR represents the maximum loss after excluding all worse outcomes whose combined probability is at most 1-$\theta$.

\subsubsection{Risk model training}
Finally, the risk model is trained on labeled validation data to optimize a learn-to-rank objective by tuning the risk feature weight parameters ($w_i$) as well as their variances ($\sigma_i^2$). As for their expectations ($\mu_i$), they are considered as prior knowledge, and estimated from labeled training data. Once trained, the risk model can be used to assess the misclassification risk on an unseen workload labeled by a classifier.

	\section{Risk-based Adaptive Training}
\label{sec:risktrain}

In this section, we propose, and then analyze the approach of risk-based adaptive training for ER. We take \emph{DeepMatcher}~\citep{mudgal2018deep}, an classical DNN solution for ER, as an example to illustrate the solution. However, it is worthy to point out that 
in principle, the proposed approach can work with any DNN classifier; the implementation of the proposed solution on other DNNs is similar. In our empirical evaluation, we have implemented the proposed solution on both \emph{deepmatcher} and \emph{Ditto}~\citep{li2020deep}, the most recent DNN for ER based on Transformer-based language models. For comparison, we also briefly describe the traditional training approach.

\subsection{Traditional Training} \label{sec:traditional-train}

Given a workload consisting of $D^s$, $D^v$ and $D^t$, let $g(\mathbf{\omega})$ denote a DNN classifier with the parameters of $\mathbf{\omega}$. The traditional approach, as shown in the left part of Figure 2, tunes $\mathbf{\omega}$ towards the training data, $D^s$, based on a pre-specified loss function. Suppose that there are totally $n_s$ training instances in $D^s$. DeepMatcher employs the classical cross-entropy loss function to guide the process of parameter optimization as follows
\begin{equation}
\begin{split}
	\mathcal{L}_{train}(\mathbf{\omega}) = \frac{1}{n_s}\sum_{i=1}^{n_s}\{&-y^s_ilog(g(\mathbf{x}^s_i;\mathbf{\omega}))- \\
		&(1-y^s_i)log(1-g(\mathbf{x}^s_i;\mathbf{\omega}))\},
\end{split}
\end{equation}
where $y^s_i$ denotes the ground-truth label of a training instance, $(\mathbf{x}^s_i, y^s_i)$, and $g(\mathbf{x}^s_i;\mathbf{\omega})$ denotes its label probability as predicted by the classifier. DeepMatcher uses the Adam optimizer~\citep{kingma2014adam} to search for the optimal parameters $\mathbf{\omega}_*$ by gradient descent. 

\subsection{Risk-based Adaptive Training} \label{sec:hybrid-train}
\label{sec:framework}

Risk-based adaptive training, as shown in Figure 2, consists of two phases, a \emph{traditional training} phase followed by an \emph{risk-based training} phase. In the first phase, it tunes a deep model towards training data in the traditional way. Then, in the following \emph{risk-based training} phase, it iteratively performs: i) using LearnRisk to learn a risk model based on a trained classifier and validation data; ii) fine-tuning the classifier by minimizing its misprediction risk upon the target workload.

 Specifically, the loss function of risk-based training is defined as 
\begin{equation} \label{eq:riskloss}
\begin{split}
	\mathcal{L}_{test}^{risk}(\mathbf{\omega}) = \frac{1}{n_t}\sum_{i=1}^{n_t}\{&-[1- VaR^+(d_i)]log( g(\mathbf{x}^t_i;\mathbf{\omega})) - \\
	&[1-VaR^-(d_i)]log(1 - g(\mathbf{x}^t_i;\mathbf{\omega}))\},
\end{split}
\end{equation}
in which $n_t$ denotes the total number of instances in $D^t$, $VaR^+(d_i)$ (resp. $VaR^-(d_i)$) denotes the estimated misprediction risk of $d_i$ if it is labeled as \emph{matching} (resp. \emph{unmatching}).

Similar to traditional training, the \emph{risk-based training} phase updates the parameters of the deep model by gradient descent as follows
\begin{equation}
	\omega_{k+1} = \omega_{k} - \alpha * \nabla_{\omega_k}\mathcal{L}_{test}^{risk}(\omega).
\end{equation}
Note that in each iteration, risk values are estimated based on the classifier predictions of the previous iteration. As a result, they are considered as constant while computing gradient descent .

We have sketched the process of risk-based adaptive training in Algorithm~\ref{alg:risktrain}. The first phase pre-trains a model based on labeled training data, and selects the best one based on its performance on the validation data. Beginning with the pre-trained model, the second phase iteratively fine-tunes the parameters by minimizing the loss of $\mathcal{L}_{test}^{risk}(\mathbf{\omega})$.

\begin{algorithm}[t]
	\caption{Risk-based Adaptive Training}
	\label{alg:risktrain}
	\begin{algorithmic}
		\STATE {\bfseries Input:} A task $D$ consisting of $D^s$, $D^v$ and $D^t$, and an ER model, $g(\mathbf{\omega})$; 
		\STATE {\bfseries Output:} A learned classifier $g(\mathbf{\omega}_*)$.
		\STATE $\mathbf{\omega}_0 \gets$Initialize $\mathbf{\omega}$ with random values;
		\FOR{$k=0$ {\bfseries to} $m-1$}
		\STATE $\mathbf{\omega}_{k+1} \gets \mathbf{\omega}_{k} - \alpha* \nabla_{\mathbf{\omega}_k}\mathcal{L}_{train}(\mathbf{\omega}_k)$;
		\ENDFOR
		\STATE  Select the best model, $g(\mathbf{\omega}_*)$, based on $D^v$;
		\STATE $\mathbf{\omega}_m \gets \mathbf{\omega}_*$;
		\FOR{$k=m$ {\bfseries to} $m+n-1$}
		\STATE Update the risk model based on $D^v$ and $g(\mathbf{\omega}_k)$;
		\STATE $\mathbf{\omega}_{k+1} \gets \mathbf{\omega}_{k} - \alpha* \nabla_{\mathbf{\omega}_k}\mathcal{L}_{test}^{risk}(\mathbf{\omega}_k)$;
		\ENDFOR
		\STATE Select the best model, $g(\mathbf{\omega}_*)$, based on $D^v$.
		\STATE {\bfseries Return} $g(\mathbf{\omega}_*)$
	\end{algorithmic}
\end{algorithm}

\subsection{Theoretical Analysis}

Empirically, it is widely observed that DNNs are highly expressive leading to very low training errors provided with correct information. It can also be observed that given an estimated distribution of equivalence probability, ($\mu_i$,$\sigma^2_i$), for a pair $d_i$, $VaR^+(d_i) < VaR^-(d_i)$ if and only if $\mu_i > 0.5$. Hence, according the loss function defined in Eq.\ref{eq:riskloss}, for an equivalent pair of $d_i$ in $D^t$, $\mu_i > 0.5$ would result in it being correctly classified. Similarly, if $d_i$ is an inequivalent pair, $\mu_i < 0.5$ would result in it being correctly classified.

Suppose that \emph{LearnRisk} generates totally $m$ risk features, denoted by \{$f_1$,$\dots$,$f_m$\}. Let $Z_i$ be a 0-1 variable indicating whether an instance has the risk feature $f_i$: $Z_i=1$ if the instance has $f_i$, otherwise $Z_i=0$. 
Let $\mathbf{Z}=(Z_1,Z_2,...,Z_m)$ denote a risk feature distribution. 
We can reasonably expect that LearnRisk is generally effective: if an instance is equivalent (resp. inequivalent), its risk features (excluding its DNN output) are supposed to indicate its equivalence (resp. inequivalence) status. As shown in Eq.3, LearnRisk estimates the equivalence probability expectation of an instance by a weighted linear combination of the expectations of its DNN risk feature and rule risk features. Specifically, given an equivalent pair of $d_i$, ($\mu_i$,$\sigma^2_i$), with $m$ rule risk features, we have
\begin{equation}\label{eq:EffectiveRisk}
\mathbb{E}(\frac{\sum_{j=1}^{m}z_j \cdot w_j \cdot \mu_{f_j}}{\sum_{j=1}^{m}z_j \cdot w_j}) > 0.5,
\end{equation}
in which $f_j$ denotes a rule risk feature, and $\mathbb{E}(*)$ denotes the statistical expectation. Similarly, if $d_i$ is inquivalent, it satisfies
\begin{equation}\label{eq:EffectiveRisk-2}
\mathbb{E}(\frac{\sum_{j=1}^{m}z_j \cdot w_j \cdot \mu_{f_j}}{\sum_{j=1}^{m}z_j \cdot w_j}) < 0.5.
\end{equation}

According to Eq.~\ref{eq:EffectiveRisk} and ~\ref{eq:EffectiveRisk-2}, once a pair is correctly labeled by a classifier, it can be expected that its label would not be flipped by risk-based fine-tuning. Our experiments on real data have confirmed that risk-based fine-tuning rarely flips the labels of true positives and true negatives. Therefore, in the rest of this subsection, we focus on showing that given a mispredicted instance, risk-based fine-tuning can result in a value of $\mu_i$ consistent with its ground-truth label with a fairly good chance.

For theoretical analysis, since both true positives and false negatives (resp. true negatives and false postives) are equivalent (resp. inequivalent) instances, they are assumed to share the same distribution of risk feature activation.  
Formally, we state the assumption on risk feature distribution as follows:

\begin{assumption}
\label{assumption:identical-risk-feature}
{\bf Identicalness of Risk Feature Distributions.}	Given an ER workload $D^t$, the risk feature activation of each equivalent instance $d_i^+$, denoted by $\mathbf{Z}_i^+$, is supposed to follow the same distribution of $\mathbf{Z}^+$; similarly, the risk feature activation of each inequivalent instance $d_i^-$, denoted by $\mathbf{Z}_i^-$, is supposed to follow the same distribution of $\mathbf{Z}^-$. 

\end{assumption}

Based on Assumption~\ref{assumption:identical-risk-feature}, we can establish the lower bound of the estimated equivalence probability expectation of a false negative by the following theorem:
\begin{theorem}
\label{theorem:fn}
	Given a false negative $\tilde{d}_j^{-}$, suppose that there are totally $n$ true positives, denoted by $d_i^+$, ranked after $\tilde{d}_j^{-}$ by LearnRisk such that each true positive, $d_i^+$, satisfies 
\begin{equation}	
		\Delta VaR^{-} - \Delta C^{-} > \epsilon, 
\end{equation} 		
in which 	$\Delta VaR^{-}=VaR^-(\tilde{d}_j^{-}) - VaR^+(d_i^+)$, and $\Delta C^{-} = \mathbb{E}(\mu_{d_i^{+}}-2\sigma_{d_i^{+}})- \mathbb{E}(\mu_{\tilde{d}_j^{-}}-2\sigma_{\tilde{d}_j^{-}})$. Then, for any $\delta \in (0, 1)$, with probability at least $1-\delta$, its expectation of equivalence probability of $\tilde{d}_j^{-}$, $\mu_{\tilde{d}_j^{-}}$, estimated by \emph{LearnRisk}, satisfies 
	\begin{equation}
		\mu_{\tilde{d}_j^{-}} \geq \frac{1}{2} + \frac{\epsilon}{2} - \sqrt{\frac{m + 1}{2}ln[\frac{1}{1-(1-\delta^{\frac{1}{n}})^{\frac{1}{2}}}]},
	\end{equation}
in which $\mu_{*}$ denotes the expectation of equivalence probability and $\sigma_{*}$ denotes its standard deviation.
\end{theorem}

\begin{proof}
	The proofs can be found in Appendix~\ref{appendix:proof-fn}.
\end{proof}

In Theorem~\ref{theorem:fn}, $m$ denotes the number of rule risk features, and the value of $\Delta C^{-}$ corresponds to the difference of risk expectation between false negatives being labeled as \emph{matching} and true positives being labeled as \emph{matching}. Note that the total number of rule risk features ($m$) is usually limited (e.g., dozens or hundreds), while $n$ is usually much larger than $m$. It can be observed that in Theorem~\ref{theorem:fn}, by the exponential effect of $n$, the 3rd term on the right-hand side tends to become zero as
the value of $n$ increases. Therefore, if $\epsilon > 0$ and there are sufficient true positives satisfying the specified condition, risk-based fine-tuning would have a fairly good chance to correctly flip the label of $\tilde{d}_j^{-}$ from \emph{unmatching} to \emph{matching}. To gain deeper insight into Theorem~\ref{theorem:fn}, we analyze the value of $\Delta VaR^{-} - \Delta C^{-}$. Since the optimization objective of LearnRisk is to maximize the risk difference between $VaR^-(\tilde{d}_j^{-})$ and $VaR^+(d_i^{+})$, $\Delta VaR^{-}$ can be expected to large for most true positives. Therefore, we analyze the value of $\Delta C^{-}$. Based on Assumption~\ref{assumption:identical-risk-feature}, we have the following lemma:

\begin{lemma}
\label{lemma:fn-deltac}
	\begin{equation}
		\begin{split}
		\Delta C^{-} \leq &max\big{\{}\mathbb{E}(w_{d_i^{+}}(\hat{\mu}_{d_i^{+}}-2\hat{\sigma}_{d_i^+}))-
		\mathbb{E}(w_{\tilde{d}_j^{-}}(\hat{\mu}_{\tilde{d}_j^{-}}-2\hat{\sigma}_{\tilde{d}_j^{-}})), \\ &\mathbb{E}(w_{d_i^{+}}\hat{\mu}_{d_i^{+}}) -\mathbb{E}(w_{\tilde{d}_j^{-}}\hat{\mu}_{\tilde{d}_j^{-}})\big{\}},
		\end{split}
	\end{equation}
where the $\hat{\mu}_{*}$ and $\hat{\sigma}_{*}$ denote the DNN output probability and its corresponding standard deviation respectively, $w_*$ denotes the learned weight of DNN risk feature.
\end{lemma}

\begin{proof}
	The proofs can be found in Appendix~\ref{appendix:proof-fn-deltac}.
\end{proof}

It is interesting to point out that as shown in Lemma~\ref{lemma:fn-deltac}, the value of $\Delta C^{-}$ only depends on the distributions of DNN outputs and their weights, but independent of the distributions of rule risk features. It has the simple upper bound of 
\begin{equation} \label{eq:DeltaC}
   \Delta C^{-} \leq \mathbb{E}(w_{d_i^{+}}). 
\end{equation}   
Hence, when the learned weight of DNN output becomes smaller, which means that the DNN becomes less accurate, true positives would have a higher chance to satisfy $\Delta VaR^{-} - \Delta C^{-} > 0$. In our experiments, it is observed that the expected weight of classifier output is usually between 0.2 and 0.6, or $0.2\leq \mathbb{E}(w_{d_i^{+}})\leq 0.6$. As a result, Theorem~\ref{theorem:fn} shows that a false negative has a fairly good chance to be flipped from \emph{unmatching} to \emph{matching}.

Based on Assumption~\ref{assumption:identical-risk-feature}, the theoretical chance of a false positive being flipped from \emph{matching} to \emph{unmatching} can be similarly established. The corresponding theorem and lemma are presented as follows.

\begin{theorem}
	\label{theorem:fp}
	Given a false positive $\tilde{d}_j^{+}$, suppose that there are totally $n$ true negatives, denoted by $d_i^-$, ranked after $\tilde{d}_j^{+}$ by LearnRisk such that each true negative, $d_i^-$, satisfies 
	\begin{equation}	
	\Delta VaR^{+} - \Delta C^{+} > \epsilon, 
	\end{equation} 		
	in which 	$\Delta VaR^{+}=VaR^+(\tilde{d}_j^{+}) - VaR^-(d_i^{-})$, and $\Delta C^{+} = \mathbb{E}(\mu_{{\tilde{d}_j^{+}}}+2\sigma_{\tilde{d}_j^{+}})- \mathbb{E}(\mu_{d_i^{-}}+2\sigma_{d_i^{-}})$. Then, for any $\delta \in (0, 1)$, with probability at least $1-\delta$, its expectation of equivalence probability of $\tilde{d}_j^{+}$, $\mu_{\tilde{d}_j^{+}}$ estimated by \emph{LearnRisk}, satisfies 
	\begin{equation*}
	\mu_{\tilde{d}_j^{+}} \leq \frac{1}{2} - \frac{\epsilon}{2} + \sqrt{\frac{m+1}{2}ln[\frac{1}{1-(1-\delta^{\frac{1}{n}})^{\frac{1}{2}}}]},
	\end{equation*}
	in which $\mu_{*}$ denotes the mean of equivalence probability and $\sigma_{*}$ denotes its standard deviation.
\end{theorem}
\begin{proof}
	The proofs can be found in Appendix~\ref{appendix:proof-fp}.
\end{proof}

In Theorem~\ref{theorem:fp}, the value of $\Delta C^{+}$ corresponds to the difference of risk expectation between false positives being labeled as \emph{unmatching} and true negatives being labeled as \emph{unmatching}. Similar to the case of Theorem~\ref{theorem:fn}, it can be observed that in Theorem~\ref{theorem:fp}, by the exponential effect of $n$, the 3rd term on the right-hand side tends to become zero as the value of $n$ increases. Therefore, if $\epsilon > 0$ and there are sufficient true negatives satisfying the specified condition, risk-based fine-tuning would have a fairly good chance to correctly flip the label of $\tilde{d}_j^{+}$ from \emph{matching} to \emph{unmatching}.

To gain a deeper insight into Theorem~\ref{theorem:fp}, we also analyze the value of $\Delta C^{+}$ by the following lemma:

\begin{lemma}
\label{lemma:fp-deltac}
	\begin{equation}
	\begin{split}
	\Delta C^{+} \leq max\big{\{}\mathbb{E}(w_{\tilde{d}_j^{+}}(\hat{\mu}_{\tilde{d}_j^{+}}+2\hat{\sigma}_{\tilde{d}_j^{+}}))-
	\mathbb{E}(w_{d_i^-}(\hat{\mu}_{d_i^-}+2\hat{\sigma}_{d_i^-})), \mathbb{E}(w_{\tilde{d}_j^{+}}\hat{\mu}_{\tilde{d}_j^{+}}) -\mathbb{E}(w_{d_i^-}\hat{\mu}_{d_i^-})\big{\}},
	\end{split}
	\end{equation}
	where the $\hat{\mu}_{*}$ and $\hat{\sigma}_{*}$ denote the DNN output probability and its corresponding standard deviation respectively, $w_*$ denotes the learned weight of DNN risk feature.
\end{lemma}
\begin{proof}
	The proofs can be found in Appendix~\ref{appendix:proof-fp-deltac}.
\end{proof}

Similarly, as shown in Lemma~\ref{lemma:fp-deltac}, the value of $\Delta C^{+}$ has an upper bound constrained by the weights of DNN outputs. The true negatives would tend to satisfy $\Delta VaR^{+} - \Delta C^{+} > 0$ in the case that the trained DNN model becomes less accurate. Hence, Theorem~\ref{theorem:fp} shows that a false positive has a fairly good chance to be flipped form \emph{matching} to \emph{unmatching}.

\vspace{0.2in}
\hspace{-0.15in}{\bf Empirical Validation.} We have illustrated the efficacy of theoretical analysis on the real literature dataset of DBLP-ACM\footnote{\mbox{https://github.com/anhaidgroup/deepmatcher/}\label{dataset}.} The results on the first iteration of risk-based fine-tuning are presented in Table~\ref{tab:da-statistics}, in which the false negatives (resp. false positives) are clustered according to the size of true positives (resp. true negatives) that meet the specified condition. It can be observed: 1) risk-based fine-tuning rarely flips the labels of true positives and true negatives; 2) the majority of false negatives (resp. false positives) have a large number (e.g. $\geq 100$) of corresponding true positives (resp. true negatives), and most of them are correctly flipped.

\begin{table}
	\caption{Empirical Validation of Theoretical Analysis.}
	\label{tab:da-statistics}
	\begin{center}
	
	\subtable[on True Positives and True Negatives]{
		\begin{tabular}{lcccc}
			\toprule
			 & &\# & &\# Flipped \\
			\midrule
			True Positives && 1143 && 12 \\
			True Negatives && 5987 && 27 \\
			\bottomrule
		\end{tabular}
	}
		
	\subtable[on False Negatives and False Positives]{
		\begin{tabular}{lccccc}
			\toprule
			\multirow{1}{*}{\#$(\Delta VaR^{-} > \Delta C^{-}$} & \multicolumn{2}{c}{False Negatives} & & \multicolumn{2}{c}{False Positives}\\
			\cline{2-3}\cline{5-6}
			\multirow{1}{*}{or $\Delta VaR^{+} > \Delta C^{+})$} & \# & \# Flipped & & \# & \# Flipped \\
			\midrule
			\#$< 100$ & 1 & 0 & & 0 & 0\\
			\#$\ge 100$ & 188 & 179 & & 100 & 91\\
			Total & 189 & 179 & & 100 & 91\\
			\bottomrule
		\end{tabular}
	}
\end{center}
\end{table}

	\section{Experiments}
\label{sec:experiments}

In this section, we empirically evaluate the proposed approach on real benchmark datasets by a comparative study. We first describe the experimental setting. Then, we present the comparative evaluation results. Finally, we evaluate robustness of the proposed approach w.r.t the size of validation data.

\subsection{Experimental Setup} \label{sec:exp-setup}
 
We have used six real datasets from three domains in our empirical study: 

\begin{itemize}
\item \textbf{Publication}. The datasets in this domain contain bibliographic data from different sources, i.e. DBLP, Google Scholar and ACM. As in~\cite{mudgal2018deep}, we used DBLP-Scholar\footnote{\mbox{https://github.com/anhaidgroup/deepmatcher/}\label{dataset}} (denoted by DS) and DBLP-ACM~\textsuperscript{\ref{dataset}} (denoted by DA). Additionally, we used the Cora dataset\footnote{\mbox{http://www.cs.utexas.edu/users/ml/riddle/data/cora.tar.gz}}, which contains the citation data obtained from the Cora search engine; 
\item \textbf{Music}. In this domain, we used the Itunes-Amazon dataset (denoted by IA) provided by~\cite{mudgal2018deep}. The size of IA is relatively small, containing only 539 pairs. Additionally, we used the Songs dataset\footnote{\mbox{http://pages.cs.wisc.edu/\~anhai/data/falcon\_data/songs/}} (denoted by SG), which contains song records; the experiments match the entries within the same table;
\item \textbf{Product}. In this domain, we used a dataset containing the electronics product pairs extracted from the Abt.com and Buy.com~\textsuperscript{\ref{dataset}}. We denote this dataset by AB.
\end{itemize}

\begin{table}[ht]
	\caption{The statistics of datasets.}
	\label{tab:datasets}
	\begin{center}
		\begin{small}
			\begin{sc}
				\begin{tabular}{lccc}
					\toprule
					Dataset & Size & \# Matches & \# Attributes \\
					\midrule
					DS & 28,707 & 5,347 & 4 \\
					DA & 12,363 & 2,220 & 4 \\
					Cora & 12,674 & 3,268 & 12 \\
					AB & 9,575 & 1,028 & 3 \\
					IA & 539 & 132 & 8 \\
					SG & 19,633 & 6,108 & 7 \\
					\bottomrule
				\end{tabular}
			\end{sc}
		\end{small}
	\end{center}
\end{table}

   As usual, on all the datasets, we used the blocking technique to filter the pairs deemed unlikely to match. The datasets of DS, DA, AB and IA have been made online available at~\textsuperscript{\ref{dataset}}. On both Cora and SG, we first filtered the pairs and then randomly selected a proportion of the resulting candidates to generate the workloads. The statistics of the test datasets are given in Table~ \ref{tab:datasets}.

We primarily used DeepMatcher~\citep{mudgal2018deep}, the classical deep learning solution for ER, as the classifier. We evaluate the
proposed approach both scenarios where training and test data come from the same source and they come from different sources, thus resulting in more distribution misalignment. In the scenario where training and test data come from the same source, we randomly split each dataset into three parts by a ratio (e.g. 2:2:6 in our experiments) as in~\citet{mudgal2018deep}, which specifies the proportions of training, validation and test set respectively. We evaluate the performance of the proposed approach w.r.t different sufficiency levels of training data. Since DeepMatcher performs very well on DA and SG with only 20\% of their data as training data, we randomly select 10\%, 30\%, 50\%, 70\% and 100\% of the split set of training data to simulate different sufficiency levels. On AB, we fix the proportion of validation data at 20\% and vary the proportions of training and test data from (60\%,20\%) to (20\%,60\%), resulting in totally 5 sufficiency levels. In this scenario, since training and target data are randomly selected from the same source, we compare the \emph{Risk} approach with the original DeepMatcher, which is denoted by \emph{Tradition}.

In the scenario of distribution misalignment, we used the three datasets in the domain of \emph{publication} (i.e., DS, DA and Cora) to generate six pairwise workloads. For instance, DA2DS denotes the workload where training data come from DA while validation data and test data come from DS. On all the workloads, validation and test data are randomly selected from the original target dataset with both percentages set at 20\%. 
In this scenario, besides \emph{Tradition}, we also compare \emph{Risk} with the state-of-the-art technique of transfer learning for ER proposed in ~\citet{DBLP:conf/acl/KasaiQGLP19}. We denote this approach by \emph{Transfer}. It inserted a dataset classifier into the DeepMatcher structure, which can force a deep model to focus on the parameters that are shared by both training and test data.

We have implemented the proposed solution on \emph{DeepMatcher} by replacing the original loss function with the risk-based loss function. To overcome the randomness caused by model initialization and training data shuffling, on each experiment, we perform 5 training sessions and report their mean F1-score on test data. In \emph{Tradition} and \emph{Transfer}, each training session consists of 20 iterations; in \emph{Risk}, the traditional training phase consists of 20 iterations and the risk-based training phase consists of 10 iterations. Our experiments showed that further increasing the number of iterations in each session had very marginal impact on performance.

Additionally, we have also implemented and evaluated the proposed solution on \emph{Ditto}~\citep{li2020deep}, which is the state-of-the-art DNN for ER based on pre-trained Transformer-based language models.  Note that compared with \emph{DeepMatcher}, \emph{Ditto} generally performs better and can perform well with less training data. Therefore, on AB and IA, our experiments begins with the ratio of training data at 10\%.
Similarly, we perform 5 training sessions and report their mean F1-score on test data to overcome the randomness. We use the default parameters of \emph{Ditto} for both traditional training and adaptive training, except that the learning rate of risk-based training is set to be $3*10^{-6}$, instead of the default $3*10^{-5}$. Our implementations on both \emph{DeepMatcher} and \emph{Ditto} and the test data have been made open-source at our website~\footnote{\mbox{https://chenbenben.org/adaptive-training.html}}.

\subsection{Comparative Evaluation on DeepMatcher} \label{sec:exp-comparison}

 \subsubsection{Same-source Scenario}

\begin{figure*}[ht]
	\begin{center}
		\subfigure[DS.]
		{\includegraphics[width=0.31\linewidth]{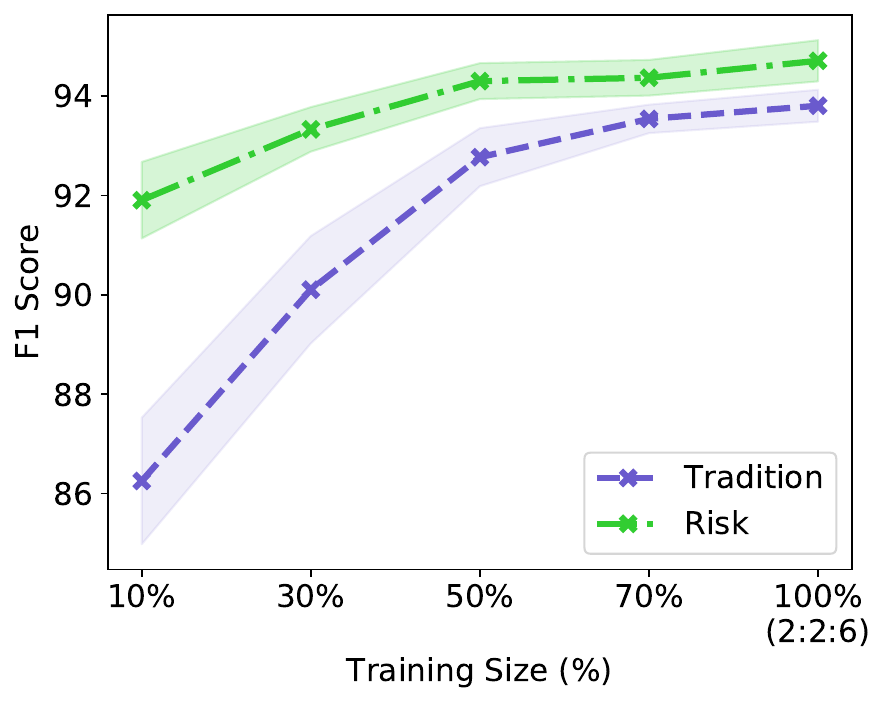}}
		\subfigure[DA.]
		{\includegraphics[width=0.31\linewidth]{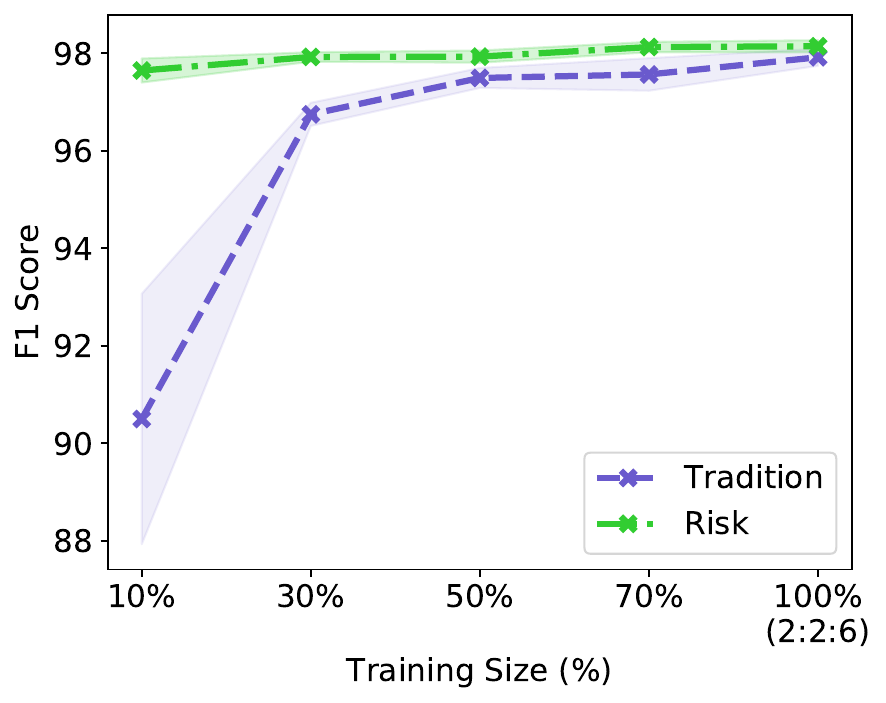}}
		\subfigure[Cora.]
		{\includegraphics[width=0.31\linewidth]{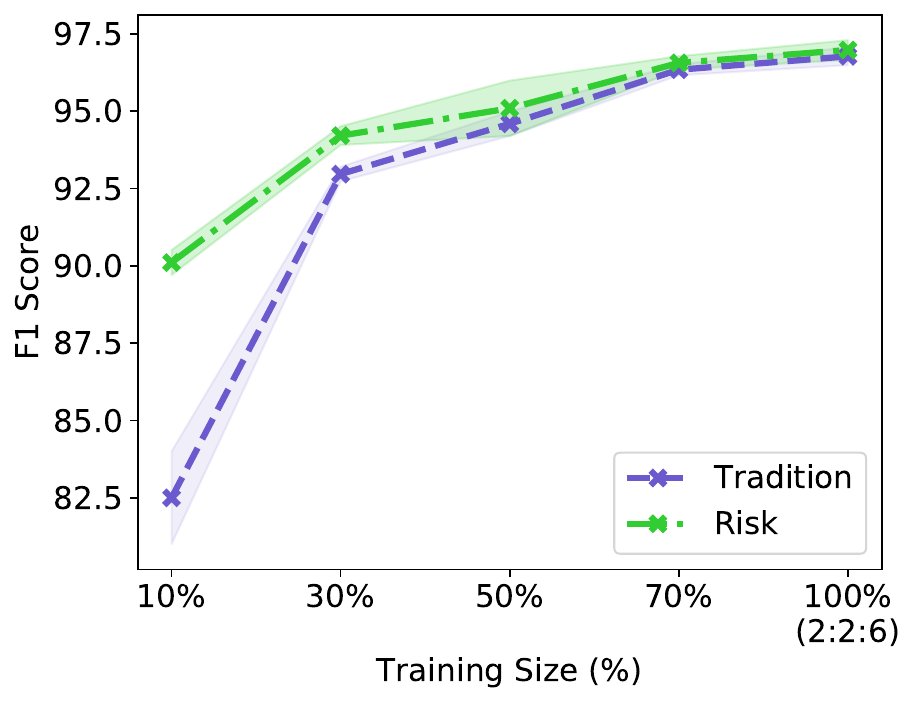}}
		\subfigure[AB.]
		{\includegraphics[width=0.31\linewidth]{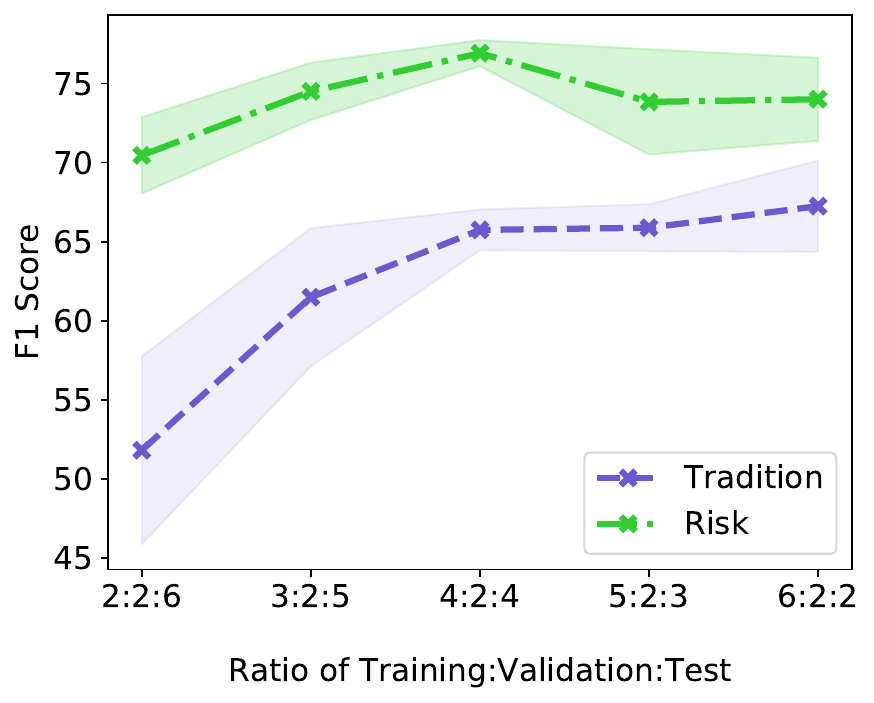}}
		\subfigure[IA.]
		{\includegraphics[width=0.31\linewidth]{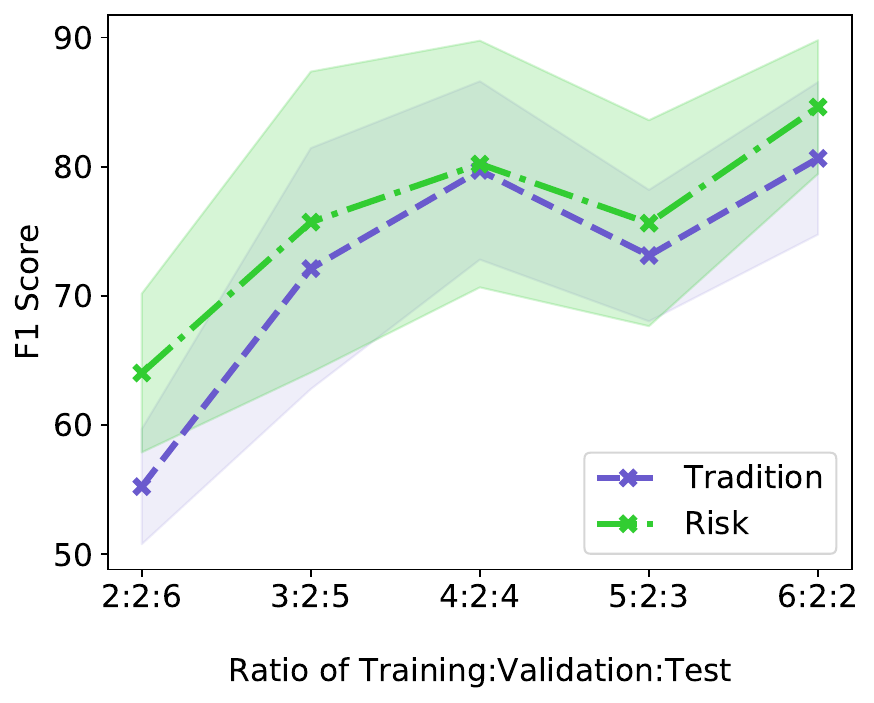}}
		\subfigure[SG.]
		{\includegraphics[width=0.31\linewidth]{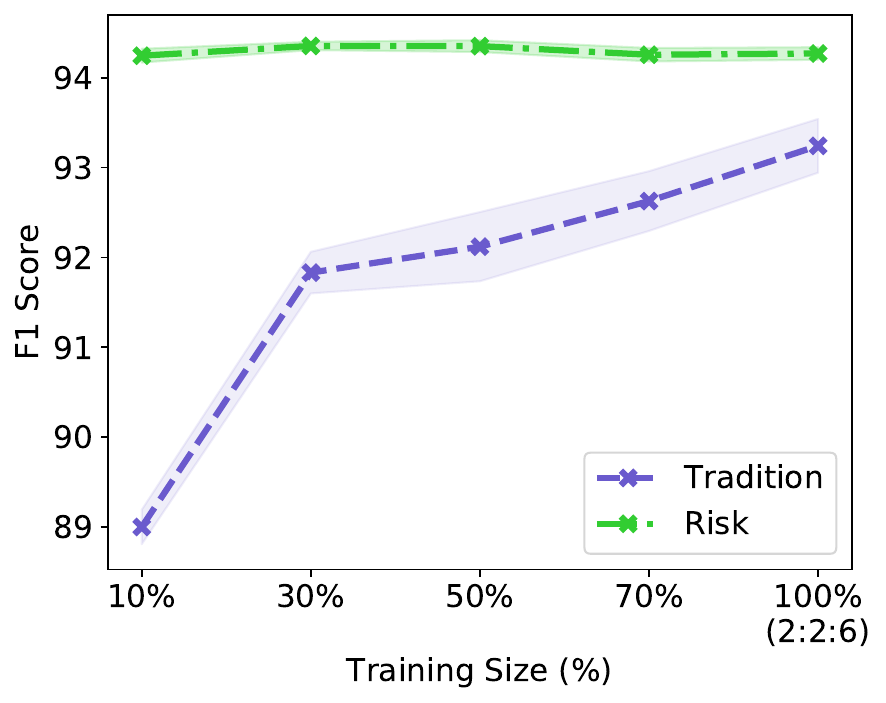}}
		\vskip -0.1in
		\caption{Comparative Evaluation with Deepmatcher: the Same-Source Scenario.}
		\label{fig:exp-comparison}
	\end{center}
	\vskip -0.15in
\end{figure*}

The comparative results are presented in Figure~\ref{fig:exp-comparison}, in which we report both the mean of F1 score and its standard deviation (represented by the shadow in the figure). It can be observed that \emph{Risk} achieves consistently better performance than \emph{Tradition}. In the circumstances where \emph{Tradition} performs unsatisfactorily (e.g. AB and IA), the performance margins between \emph{Risk} and \emph{Tradition} are very considerable. For instance, on AB, with the ratio of (2, 2, 6), \emph{Risk} achieves a performance improvement close to 20\% over \emph{Tradition} (70\% vs 51\%).

In the circumstances where \emph{Tradition} can perform well (e.g. DS, DA, Cora and SG) with the ratio of (2,2,6), it can be observed that the performance margins between \emph{Risk} and \emph{Tradition} are similarly considerable when training data are insufficient. For instance, on DS, with 10 percent of the training data, \emph{Risk} outperforms \emph{Tradition} by more than 6\% and achieves the F1 score of more than 92\%. In particular, on DA and SG, with only 10\% of the training data, \emph{Risk} achieves the performance very similar to what is achieved by using 100\% of the training data. As the size of training data increases, the margins between \emph{Risk} and \emph{Tradition} tend to decrease. This trend can be expected, because when training and test data are randomly selected from the same source, more training data mean less improvement potential for risk-based fine-tuning. Due to the small size of IA, the comparative results on IA have higher randomness compared with other datasets.

\subsubsection{Scenario of Distribution Misalignment} \label{sec:exp-ood}

The comparative results are presented in Table~\ref{tab:ood}, in which the best results have been highlighted.  It can be observed that: 1) The performance of \emph{Tradition} deteriorates significantly in most testbeds; 2) the performance of \emph{Transfer} also fluctuates wildly across the test workloads. As shown on DA2CORA and CORA2DA, its impact becomes very marginal or even negative when \emph{Tradition} performs decently; 3) \emph{Risk} consistently outperforms both \emph{Tradition} and \emph{Transfer}, and the margins are very considerable in most cases. 

 We further explain the efficacy of the risk-based approach by illustrative examples. On DA2DS, we observed that the model trained on DA performs very poorly (only around 20\%) on the target workload of DS. This is mainly due to the fact that DS is more challenging than DA, and the data distribution of DA to a large extent fails to reflect the more complicated distribution of DS. In contrast, LearnRisk can reliably identify the mispredictions of the pre-trained model on DS. We observed that in the first iteration of risk-based fine-tuning, it correctly identifies totally 877 mispredictions among the top 1000 risky pairs, most of which are later correctly flipped. However, on the workloads (e.g. DS2DA and DS2CORA) where \emph{Tradition} performs well, the advantage of \emph{Risk} over \emph{Tradition} becomes less significant. This result should be no surprise because in this circumstance, risk analysis becomes more challenging. 
 
 \begin{table}[t]
 	\caption{Comparative Evaluation: Distribution Misalignment.}
 	\label{tab:ood}
 	\begin{center}
 		\begin{small}
 			\begin{sc}
 				\begin{tabular}{lccc}
 					\toprule
 					\multirow{2}{*}{Dataset} & \multicolumn{3}{c}{F1 Score (Mean $\pm$ Standard deviation)} \\
 					\cline{2-4}
 					& Baseline & Transfer & Risk\\
 					\midrule
 					DA2DS & $19.86\pm5.12$ & $43.81\pm11.88$  & {\bf 91.67}$\pm${\bf 0.56} \\
 					DA2Cora &  $76.47\pm4.59$ & $74.86\pm3.70$  & {\bf 89.08}$\pm${\bf 0.81} \\
 					Cora2DS &  $55.81\pm5.90$ & $62.08\pm6.65$  & {\bf 86.55}$\pm${\bf 1.34} \\
 					Cora2DA &  $71.28\pm5.23$ & $72.92\pm7.57$  & {\bf 96.99}$\pm${\bf 0.34} \\
 					DS2DA & $93.08\pm1.71$  & $93.50\pm1.49$  &  {\bf 94.18}$\pm${\bf 0.97} \\
 					DS2Cora & $83.11\pm1.81$  & $82.48\pm3.52$  & {\bf 84.88}$\pm${\bf 0.29} \\
 					\bottomrule
 				\end{tabular}
 			\end{sc}
 		\end{small}
 	\end{center}
 \end{table}

\subsection{Comparative Evaluation on Ditto} \label{sec:exp-ditto}

\begin{figure*}[ht]
	\begin{center}
		\subfigure[DS.]
		{\includegraphics[width=0.31\linewidth]{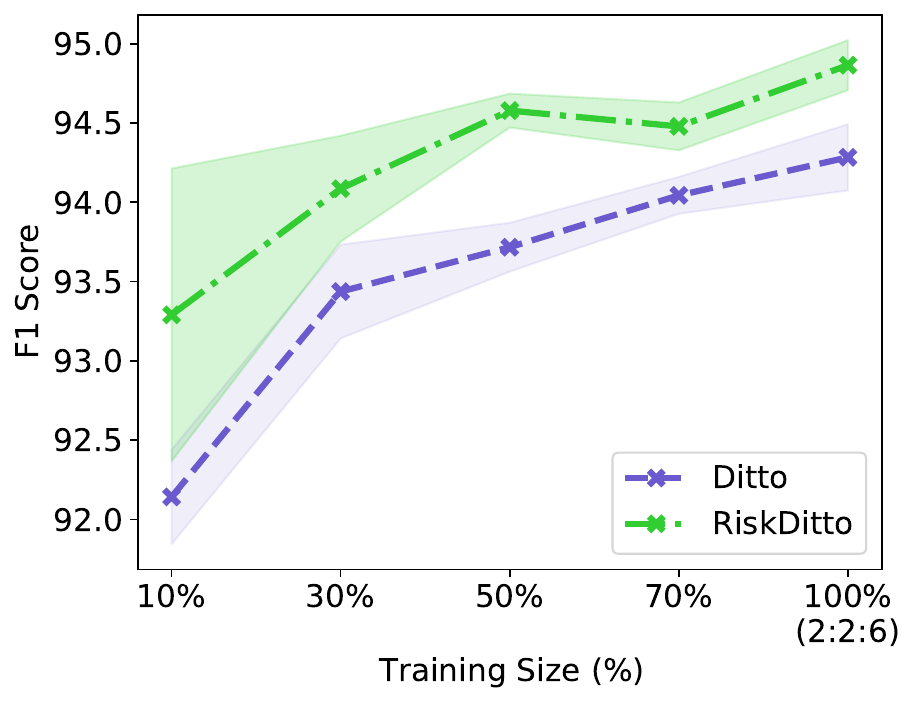}}
		\subfigure[DA.]
		{\includegraphics[width=0.31\linewidth]{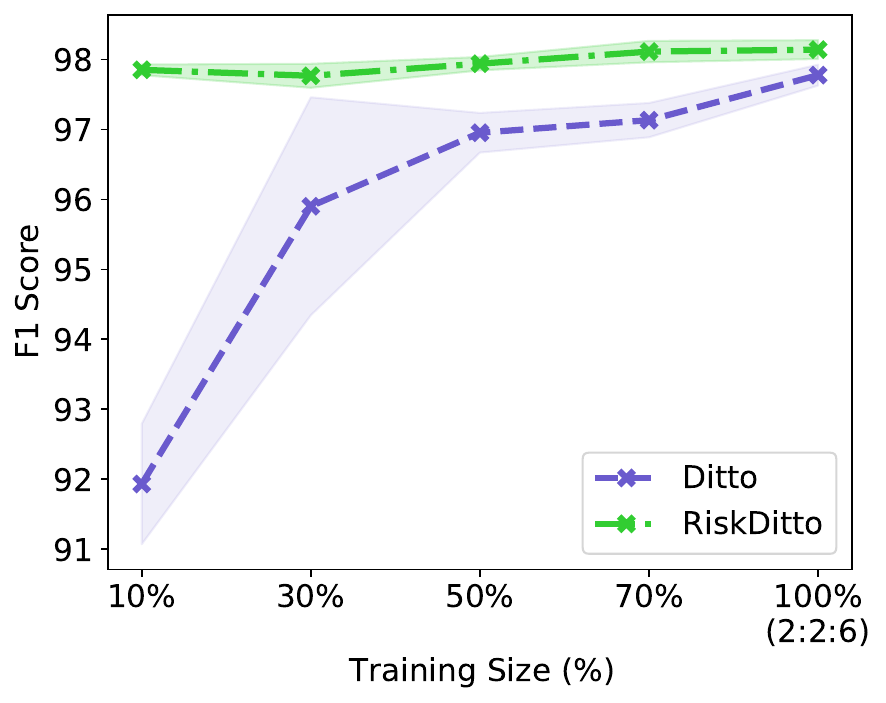}}
		\subfigure[Cora.]
		{\includegraphics[width=0.31\linewidth]{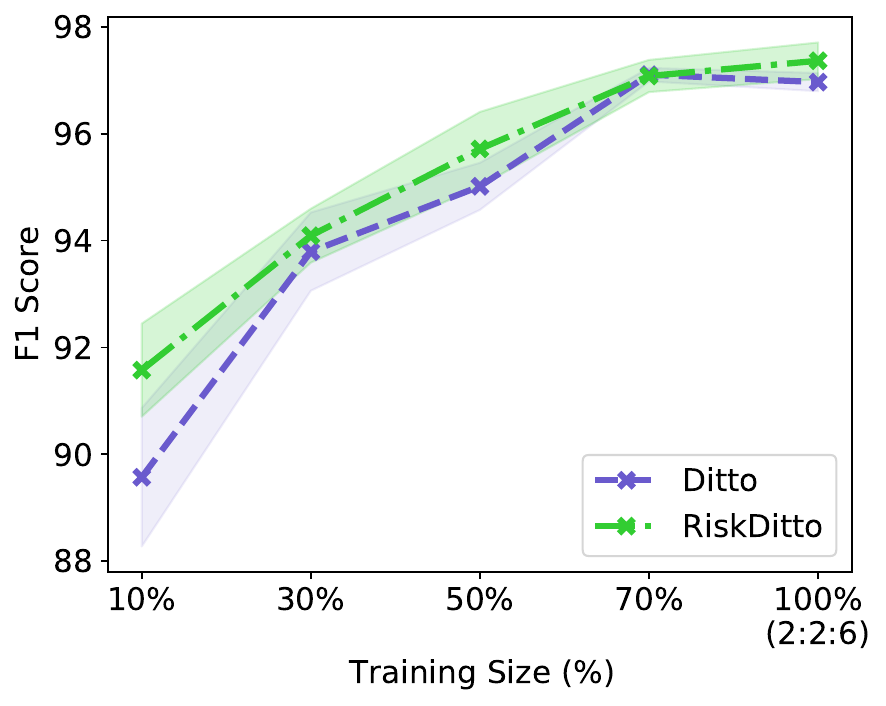}}
		\subfigure[AB.]
		{\includegraphics[width=0.31\linewidth]{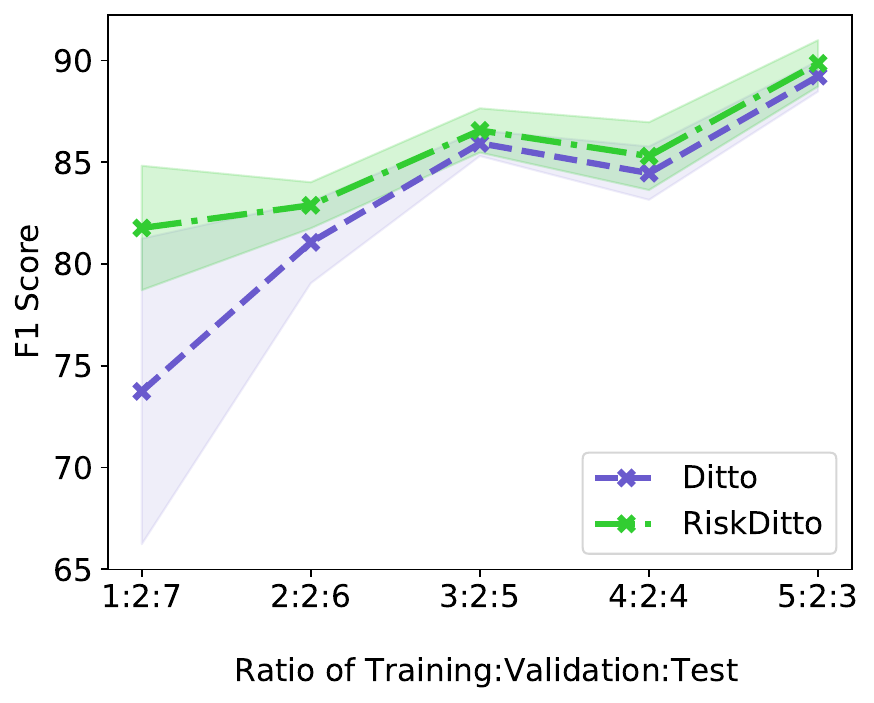}}
		\subfigure[IA.]
		{\includegraphics[width=0.31\linewidth]{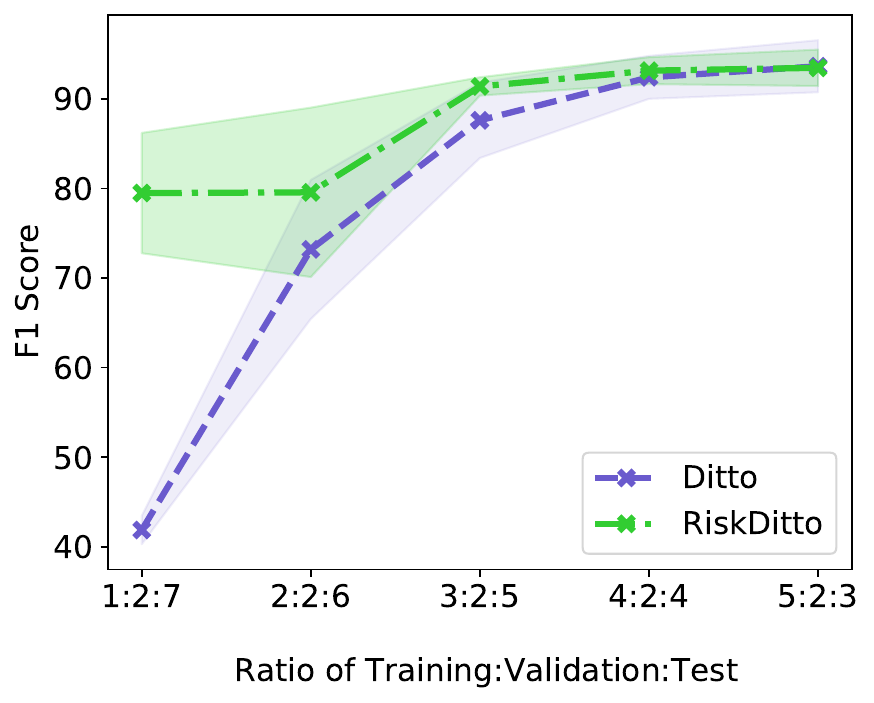}}
		\subfigure[SG.]
		{\includegraphics[width=0.31\linewidth]{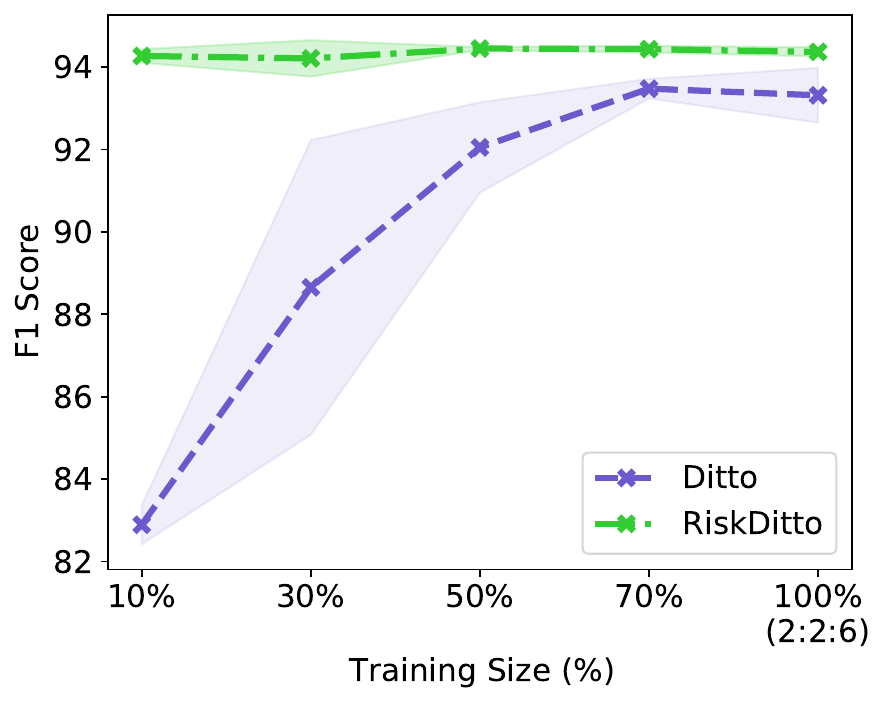}}
		\vskip -0.1in
		\caption{Comparative Evaluation with Ditto: the Same-Source Scenario.}
		\label{fig:exp-ditto}
	\end{center}
	\vskip -0.15in
\end{figure*}

 The comparative results on \emph{Ditto} are presented in Figure~\ref{fig:exp-ditto}.  It can be observed that \emph{Ditto} generally performs better than \emph{Deepmatcher} with the most improvements on AB. On AB, with the ratio setting of (2, 2, 6), the F1 score of \emph{Ditto} is 81\% while \emph{Deepmatcher} can only achieve 51\%. Similar to what have been observed on \emph{DeepMatcher}, risk-based fine-tuning effectively improves the performance of \emph{Ditto} even though it is a better baseline. For instance, on SG, with the percentage of training data at 10\% and 30\%, the performance improvements measured by F1 are 11\% and 6\% respectively. The evaluation results on \emph{Ditto} clearly demonstrate that the proposed approach of risk-based adaptive training is generally applicable to various DNN models.

\subsection{Robustness w.r.t Size of Validation Data} \label{sec:exp-sensitivity}
 
 \begin{figure}
 	\begin{center}
 		\subfigure[DA.]
 		{\includegraphics[width=0.31\columnwidth]{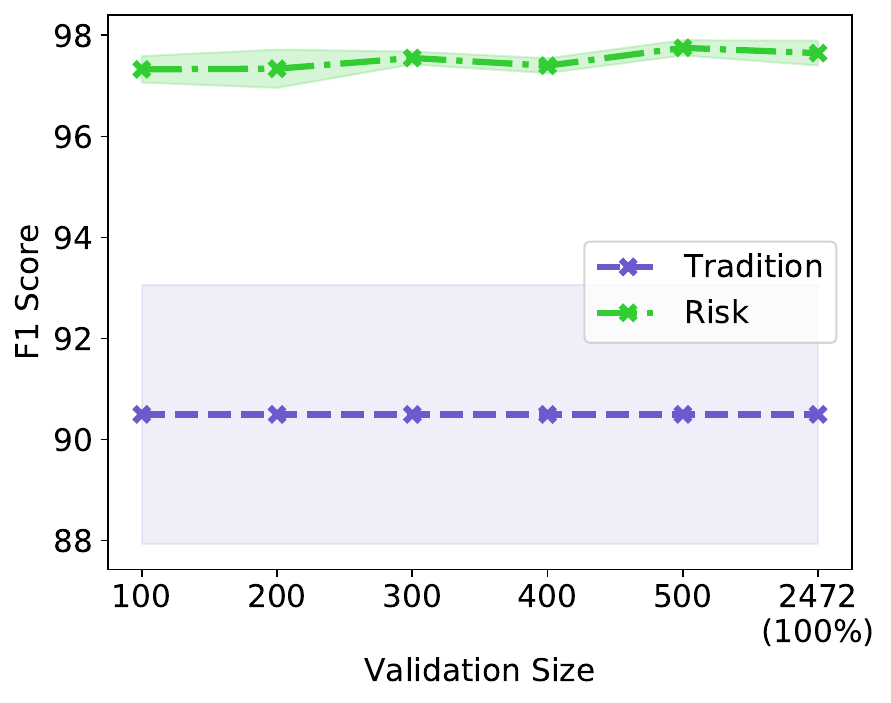}}
 		\subfigure[AB.]
 		{\includegraphics[width=0.31\columnwidth]{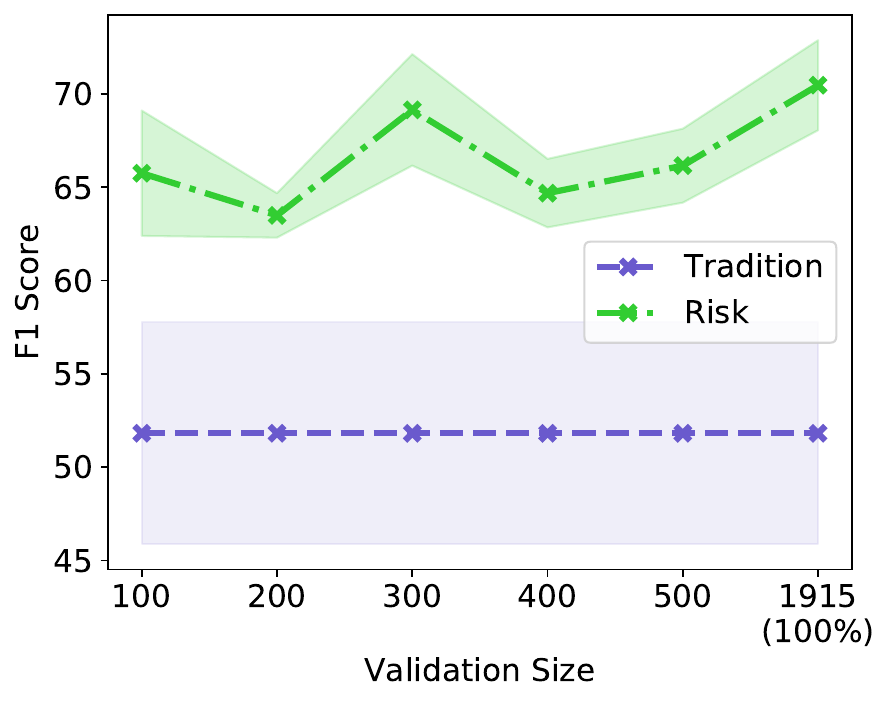}}
 		\subfigure[SG.]
 		{\includegraphics[width=0.31\columnwidth]{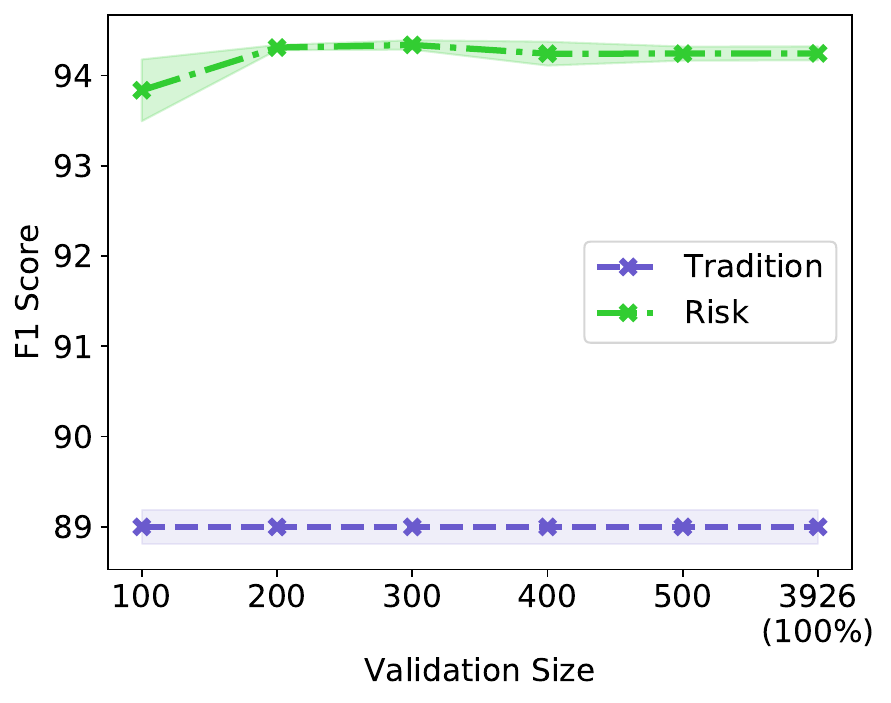}}
 		\caption{Robustness Evaluation.}
 		\label{fig:sensitivity}
 	\end{center}
 \end{figure}

In real scenarios, validation data are necessary for hyperparameter tuning and model selection to ensure that a trained model can generalize well. However, due to labeling cost, it is usually desirable to reduce the size of validation data. Since risk analysis leverages validation data for risk model learning, we evaluate the performance robustness of the proposed approach w.r.t the size of validation data.

To this end, we fixed the sets of training and test data at 20\% and 60\% respectively, and varied the size of validation data by randomly selecting part of instances from the split set of validation data. The results on the DA, AB and SG workloads are presented in Figure~\ref{fig:sensitivity}, in which the performance of \emph{Tradition} and \emph{Risk} with the whole validation data are also included for reference. The evaluation for DA and SG is based on the setting that 10\% of the split set of training data is used. It can be observed that with as few as 100 validation instances, \emph{Risk} is able to improve classifier performance by considerable margins. Our evaluation results are consistent with those reported in ~\citet{chen2019towards}, which showed that the performance of LearnRisk is very robust w.r.t the size of validation data. These experimental results bode well for the application of the proposed approach in real scenarios.

	\section{Conclusion}
\label{sec:conclusion}

In this paper, we have proposed a risk-based approach to enable adaptive deep learning for ER. It can effectively tune deep models towards a target workload by its particular characteristics. Both theoretical analysis and empirical study have validated its efficacy. For future work, it is worthy to point out that the proposed approach is generally applicable to other classification tasks; their technical solutions however need further investigations.

	\newpage
	
	\appendix

\section{Theoretical Analysis}

In theoretical analysis, for simplicity of presentation, without loss of generality, we suppose that \emph{LearnRisk} sets the confidence value at $\theta=0.975$. Hence, given a pair $d_i$ with the equivalence probability distribution of $\mathcal{N}(\mu_i, \sigma_i^2)$, its VaR risk is equal to $1-(\mu_i-2\sigma_i)$ if it is labeled as \emph{matching} by a classifier, and its VaR risk is equal to $\mu_i+2\sigma_i$ if it is labeled as \emph{unmatching}.

%
%
\subsection{Proof of Theorem~\ref{theorem:fn}}
\label{appendix:proof-fn}
	\textbf{Theorem~\ref{theorem:fn}} \emph{Given a false negative $\tilde{d}_j^{-}$, suppose that there are totally $n$ true positives, denoted by $d_i^+$, ranked after $\tilde{d}_j^{-}$ by LearnRisk such that each true positive, $d_i^+$, satisfies 
	\begin{equation}	
	\Delta VaR^{-} - \Delta C^{-} > \epsilon, 
	\end{equation} 		
	in which $\Delta VaR^{-}=VaR^-(\tilde{d}_j^{-}) - VaR^+(d_i^+)$, and $\Delta C^{-} = \mathbb{E}(\mu_{d_i^{+}}-2\sigma_{d_i^{+}})- \mathbb{E}(\mu_{\tilde{d}_j^{-}}-2\sigma_{\tilde{d}_j^{-}})$. Then, for any $\delta \in (0, 1)$, with probability at least $1-\delta$, its expectation of equivalence probability of $\tilde{d}_j^{-}$, $\mu_{\tilde{d}_j^{-}}$, estimated by \emph{LearnRisk}, satisfies 
	\begin{equation*}
	\mu_{\tilde{d}_j^{-}} \geq \frac{1}{2} + \frac{\epsilon}{2} - \sqrt{\frac{m+1}{2}ln[\frac{1}{1-(1-\delta^{\frac{1}{n}})^{\frac{1}{2}}}]},
	\end{equation*}
	in which $\mu_{*}$ denotes the mean of equivalence probability and $\sigma_{*}$ denotes its standard deviation.}

Note that in Theorem~\ref{theorem:fn}, $m$ denotes the number of rule risk features and $\Delta C^{-}$ denotes the difference of risk expectation between false negatives being labeled as \emph{matching} and true positives being labeled as \emph{matching}. In the rest of this section, we first prove a lemma that states the concentration inequalities of VaR risk functions, and then prove Theorem~\ref{theorem:fn} based on the lemma.

\begin{lemma}
\label{lemma:fn}
	Given a randomly selected pair $d_i^+$ from true positives, whose equivalence probability distribution estimated by \emph{LearnRisk} is denoted by $\mathcal{N}(\mu_{d_i^+},\sigma_{d_i^+})$, for any $\delta \in (0, 1)$, with probability at least $(1-\delta)$, the following inequality holds
	\begin{equation*}
	(\mu_{d_i^+}-2\sigma_{d_i^+})-\mathbb{E}(\mu_{d_i^+}-2\sigma_{d_i^+}) \leq \varepsilon,
	\end{equation*}
	where $\varepsilon=\sqrt{\frac{m+1}{2}ln(\frac{1}{\delta})}$. Similarly, for a randomly selected false negative $\tilde{d}_j^{-}$ with equivalence probability distribution of $\mathcal{N}(\mu_{\tilde{d}_j^{-}},\sigma_{\tilde{d}_j^{-}})$, with probability at least $(1-\delta)$, the following inequality holds
	\begin{equation*}
	\mathbb{E}(\mu_{\tilde{d}_j^{-}}-2\sigma_{\tilde{d}_j^{-}}) - (\mu_{\tilde{d}_j^{-}}-2\sigma_{\tilde{d}_j^{-}}) \leq \varepsilon.
	\end{equation*}
\end{lemma}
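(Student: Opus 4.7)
The plan is to recognize $\mu_{d_i^+}-2\sigma_{d_i^+}$ as a function of the $(m+1)$-dimensional risk feature activation vector of $d_i^+$ (the $m$ rule risk features together with the DNN risk feature) and invoke McDiarmid's bounded-differences inequality. The closed form $\varepsilon=\sqrt{\frac{m+1}{2}\ln(1/\delta)}$ is exactly what one obtains from McDiarmid with $m+1$ independent coordinates whose per-coordinate bounded-differences constants are $\le 1$, so this is the natural tool. Under Assumption~1 the activation vector is drawn from a fixed distribution over the population of true positives, which provides the randomness over which the concentration is stated.

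First I would write $\mu_{d_i^+}-2\sigma_{d_i^+}$ explicitly using Eq.~3 and Eq.~4 as
\[
\Phi(Z_1,\ldots,Z_m,Z_{\mathrm{DNN}}) \;=\; \mathbf{z}(\mathbf{w}\circ \mu_F) \;-\; 2\sqrt{\mathbf{z}(\mathbf{w}^2\circ \sigma_F^2)},
\]
and treat the coordinates of $\mathbf{Z}$ as the independent source of randomness. Next I would verify the bounded-differences condition coordinate by coordinate: flipping a single $Z_j$ perturbs the linear $\mu$-term by $w_j\mu_{f_j}$ and, using the elementary inequality $|\sqrt{a+b}-\sqrt{a}|\le\sqrt{b}$ applied inside the square root, perturbs the $2\sigma$-term by at most $2|w_j\sigma_{f_j}|$. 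Under the standard LearnRisk normalization (feature expectations in $[0,1]$ and bounded weights), the total per-coordinate change $c_j\le 1$, and the same reasoning handles the DNN coordinate.

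With all $c_j\le 1$, McDiarmid's inequality immediately gives
\[
\Pr\bigl(\Phi-\mathbb{E}[\Phi]\ge \varepsilon\bigr)\;\le\;\exp\!\left(-\frac{2\varepsilon^{2}}{\sum_{j} c_j^{2}}\right)\;\le\;\exp\!\left(-\frac{2\varepsilon^{2}}{m+1}\right),
\]
and equating the right-hand side with $\delta$ yields $\varepsilon=\sqrt{\frac{m+1}{2}\ln(1/\delta)}$, proving the first claim. The symmetric statement for the false negative $d_j^{-'}$ follows by applying McDiarmid in the opposite direction to $\Phi$ (equivalently, to $-\Phi$) using the inequivalent-instance distribution from Assumption~1.

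The main obstacle is the bounded-differences verification, and specifically the $\sigma$-term: because $\sigma_{d_i^+}$ enters through a square root over an aggregated quadratic form, the effect of flipping a single $Z_j$ is not linear, so the Lipschitz-in-one-coordinate estimate via $|\sqrt{a+b}-\sqrt{a}|\le \sqrt{b}$ has to be carried out carefully. Pinning the per-coordinate constants to $1$ also requires invoking LearnRisk's normalization conventions on the $w_j$'s, $\mu_{f_j}$'s and $\sigma_{f_j}$'s; if the natural bound is some $B<\infty$ rather than $1$, the same argument still goes through with $\varepsilon$ scaled by $B$, so the stated form implicitly assumes the normalized regime.
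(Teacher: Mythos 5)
Your proposal follows essentially the same route as the paper: write $\mu_{d_i^+}-2\sigma_{d_i^+}$ as a function of $m+1$ independent coordinates, verify the bounded-differences property with per-coordinate constants $c_k\le 1$, apply McDiarmid's inequality to get $\exp(-2\varepsilon^2/(m+1))$, and solve for $\varepsilon$; the false-negative direction is symmetric. The one place you diverge is the verification of $c_k\le 1$, and there your argument as written has a weakness that the paper's does not. The paper's risk model is the \emph{normalized} aggregate $\mu=\bigl(\sum_k w_k\mu_{f_k}Z_k+w\hat\mu\bigr)/\bigl(\sum_k w_k Z_k+w\bigr)$ (and analogously for $\sigma$), so flipping a single $Z_j$ perturbs the numerator \emph{and} the denominator; your estimate ``$w_j\mu_{f_j}$ for the linear term plus $2|w_j\sigma_{f_j}|$ via $|\sqrt{a+b}-\sqrt{a}|\le\sqrt{b}$'' works only on the unnormalized form and, even there, needs the extra hypothesis that $w_j\mu_{f_j}+2w_j\sigma_{f_j}\le 1$, which you have to import as a normalization convention. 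The paper sidesteps all of this with a one-line observation: a valid equivalence probability lies in $[0,1]$, hence $0\le\mu\pm2\sigma\le1$, so the function's total range is $1$ and every bounded-differences constant is trivially at most $1$. Also note the paper's $(m+1)$-th coordinate is the DNN output probability $\hat\mu_{d_i^+}$ itself (with its weight a function of it), not a 0--1 indicator $Z_{\mathrm{DNN}}$; this does not change the count or the constant, but it is the reading consistent with the risk model. Your conclusion is correct, and the gap is easily patched by substituting the global-range argument for your coordinatewise Lipschitz computation.
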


\begin{proof}
	Consider the randomly selected pair $d_i^+$ from true positives. 
	The mean of its equivalence probability can be represented by
	\begin{equation*}
		\mu_{d_i^+}=\frac{\displaystyle\sum_{k=1}^{m}w_k\mu_{f_k}Z_k + w_{d_i^+}\hat{\mu}_{d_i^+}}{\displaystyle\sum_{k=1}^{m}w_k Z_k + w_{d_i^+}},
	\end{equation*}
	where $m$ is the number of rule risk features, $w_k$ is the learned weight of a risk feature $f_k$, $\mu_{f_k}$ is the probability mean of the feature, $Z_k$ is a random variable indicates if a selected pair has this feature, and $\hat{\mu}_{d_i^+}$ is the output probability by a classifier with its weight $w_{d_i^+}$. Note that for a randomly selected true positive, the values of $w_k$ and $\mu_{f_k}$ for each rule risk feature are fixed, while $Z_k$, $\hat{\mu}_{d_i^+}$ and $w_{d_i^+}$ are random variables. Note that according to LearnRisk, the value of $w_{d_i^+}$ totally depends on $\hat{\mu}_{d_i^+}$. 
	
	The standard deviation of its equivalence probability can also be represented by
	\begin{equation*}
		\sigma_{d_i^+}=\frac{1}{\displaystyle\sum_{k=1}^{m}w_k Z_k + w_{d_i^+}}\sqrt{\displaystyle\sum_{k=1}^{m}w_k^{2}\sigma_{f_k}^{2}Z_k+w^2_{d_i^+}\hat{\sigma}^2_{d_i^+}},
	\end{equation*}
	where $\hat{\sigma}^2_{d_i^+}$ denotes the corresponding variance of a classifier's output $\hat{\mu}_{d_i^+}$.
	
	Recall that a function $f: X^n \to \mathbb{R}$ has the \emph{bounded differences property} if for some non-negative constants $c_1, c_2, ..., c_n$,
	\begin{equation*}
	\mathop{sup}\limits_{x_1,...,x_n,x'_k \in X}{|f(x_1,...,x_{k-1}, x_k, x_{k+1},...,x_n)-f(x_1,...,x_{k-1}, x'_k, x_{k+1},...,x_n)|} \leq c_k, 1 \leq k \leq n
	\end{equation*}
	The bounded differences property shows that if the $i$th variable is changed while all the others being fixed, the value of $f$ will not change by more than $c_k$.
		
	Let $f(Z_1,...,Z_m, \hat{\mu}_{d_i^+})=\mu_{d_i^+}-2\sigma_{d_i^+}$. Now we proceed to consider the bounded differences property of $f$. Note that a valid equivalence probability should be between 0 and 1. Hence, for all $\mu\geq0, \sigma\geq0$, we have $0\leq\mu\pm 2\sigma\leq1$. As a result, by changing the value of $Z_k$, we have
	\begin{equation*}
	sup|f(z_1,...,z_k,...,z_m, \hat{\mu}_{d_i^+})-f(z_1,...,z'_k,...,z_m, \hat{\mu}_{d_i^+})| \leq 1
	\end{equation*}
	
	Similarly, the upper bound of $f$ by changing the value of $\hat{\mu}_{d_i^+}$ is
	\begin{equation*}
	sup|f(z_1,...,z_m, \hat{\mu}_{d_i^+})-f(z_1,...,z_m, \hat{\mu}'_{d_i^+})| \leq 1
	\end{equation*}
	
At this point, we have obtained the upper bounds of the function $f(Z_1,...,Z_m, \hat{\mu}_{d_i^+})$ by changing any one of the variables. Denoting these bounds as $c_1, ..., c_m, c_{m+1}$, where $c_k=1, 1\leq k \leq m+1$, we have
\begin{equation*}
	\displaystyle\sum_{k=1}^{m+1}c_k^2 = m+1
\end{equation*}

Recall that the McDiarmid's inequality \citep{mcdiarmid_1989} states that if a function $f$ satisfies the bounded differences property with constants $c_1,...,c_n$. Let $Y=f(X_1, ..., X_n)$, where the $X_k$s are independent random variables. Then, for all $\varepsilon > 0$,
	\begin{equation*}
	\mathbb{P}(Y-\mathbb{E}Y \geq \varepsilon) \leq exp(-\frac{2\varepsilon^2}{\sum_{k=1}^{n}c_k^{2}});
	\end{equation*}
	\begin{equation*}
	\mathbb{P}(\mathbb{E}Y - Y \geq \varepsilon) \leq exp(-\frac{2\varepsilon^2}{\sum_{k=1}^{n}c_k^{2}}).
	\end{equation*}

Based on the McDiarmid's inequality, for all $\varepsilon > 0$, we have
\begin{equation*}
	\mathbb{P}(\mu_{d_i^+}-2\sigma_{d_i^+}-\mathbb{E}(\mu_{d_i^+}-2\sigma_{d_i^+}) \geq \varepsilon)\leq exp(-\frac{2\varepsilon^2}{\sum_{k=1}^{m+1}c_k^{2}}) = exp(-\frac{2\varepsilon^2}{m+1}).
\end{equation*}

Let $\delta=exp(-\frac{2\varepsilon^2}{m+1})$, we can get that $\varepsilon=\sqrt{\frac{m+1}{2}ln(\frac{1}{\delta})}$. Hence, with the probability at least $1-\delta$, the inequality $\mu_{d_i^+}-2\sigma_{d_i^+}-\mathbb{E}(\mu_{d_i^+}-2\sigma_{d_i^+}) \leq \varepsilon$ holds.

 Similarly,  with the probability at least $1-\delta$, we have $\mathbb{E}(\mu_{\tilde{d}_j^{-}}-2\sigma_{\tilde{d}_j^{-}}) - (\mu_{\tilde{d}_j^{-}}-2\sigma_{\tilde{d}_j^{-}}) \leq \varepsilon$.
\end{proof}

%
%
In the following, we present the proof of Theorem~\ref{theorem:fn}.

\begin{proof}
\textbf{[Theorem~\ref{theorem:fn}]}	According to Lemma~\ref{lemma:fn}, with the probability at least $(1-\delta)^2$, the following inequalities hold
	\begin{equation*}
	\begin{split}
		&(\mu_{d_i^+}-2\sigma_{d_i^+})-\mathbb{E}(\mu_{d_i^+}-2\sigma_{d_i^+})+\mathbb{E}(\mu_{\tilde{d}_j^{-}}-2\sigma_{\tilde{d}_j^{-}}) - (\mu_{\tilde{d}_j^{-}}-2\sigma_{\tilde{d}_j^{-}}) \leq 2\varepsilon; \\
		&(\mu_{d_i^+}-2\sigma_{d_i^+}-\mu_{\tilde{d}_j^{-}}+2\sigma_{\tilde{d}_j^{-}})+\mathbb{E}(\mu_{\tilde{d}_j^{-}}-2\sigma_{\tilde{d}_j^{-}})-\mathbb{E}(\mu_{d_i^+}-2\sigma_{d_i^+}) \leq 2\varepsilon;
	\end{split}
	\end{equation*}
	
Hence, we have	
	\begin{equation}
		\mu_{d_i^+}-2\sigma_{d_i^+}-\mu_{\tilde{d}_j^{-}}+2\sigma_{\tilde{d}_j^{-}} \leq 2\varepsilon+\big{[}\mathbb{E}(\mu_{d_i^+}-2\sigma_{d_i^+})- \mathbb{E}(\mu_{\tilde{d}_j^{-}}-2\sigma_{\tilde{d}_j^{-}})\big{]},
	\end{equation}
	where $\varepsilon=\sqrt{\frac{m+1}{2}ln(\frac{1}{\delta})}$. We denote 
\begin{equation}	
	\Delta C^{-} = \mathbb{E}(\mu_{d_i^+}-2\sigma_{d_i^+})- \mathbb{E}(\mu_{\tilde{d}_j^{-}}-2\sigma_{\tilde{d}_j^{-}}) = \mathbb{E}(1 - (\mu_{\tilde{d}_j^{-}}-2\sigma_{\tilde{d}_j^{-}}))-\mathbb{E}(1-(\mu_{d_i^+}-2\sigma_{d_i^+})),
\end{equation}	
	where $\mathbb{E}(1 - (\mu_{\tilde{d}_j^{-}}-2\sigma_{\tilde{d}_j^{-}}))$ is the risk expectation of false negatives being labeled as \emph{matching} and $\mathbb{E}(1-(\mu_{d_i^+}-2\sigma_{d_i^+}))$ is the risk expectation of true positives being labeled as \emph{matching}.
	Based on the definition of VaR, we have $VaR^-(\tilde{d}_j^{-})=\mu_{\tilde{d}_j^{-}} + 2\sigma_{\tilde{d}_j^{-}}$, and $VaR^+(d_i^+)=1-(\mu_{d_i^+}-2\sigma_{d_i^+})$. 
	Denoting $\Delta VaR^{-} = VaR^-(\tilde{d}_j^{-}) - VaR^+(d_i^+)$, we have,
	\begin{equation}
	\begin{split}
		1 + \Delta VaR^{-} =& \mu_{\tilde{d}_j^{-}} + 2\sigma_{\tilde{d}_j^{-}} + \mu_{d_i^+}-2\sigma_{d_i^+} \\
		\leq & \mu_{\tilde{d}_j^{-}} + \mu_{\tilde{d}_j^{-}} + 2\varepsilon+\big{[}\mathbb{E}(\mu_{d_i^+}-2\sigma_{d_i^+})- \mathbb{E}(\mu_{\tilde{d}_j^{-}}-2\sigma_{\tilde{d}_j^{-}})\big{]} \\
		= & 2\mu_{\tilde{d}_j^{-}} + 2\varepsilon + \Delta C^{-}.
	\end{split}
	\end{equation} 
	
	
	Hence, for a randomly selected false negative and a randomly selected true positive, with probability at least $(1-\delta)^2$, we have 
	\begin{equation}
	\label{equation:fn-flip-prob}
		\mu_{\tilde{d}_j^{-}} \geq \frac{1}{2} + \frac{\Delta VaR^{-}}{2} - \sqrt{\frac{m+1}{2}ln(\frac{1}{\delta})} - \frac{\Delta C^{-}}{2}.
	\end{equation}
	Note that the probability of the above inequality does not hold is $[1-(1-\delta)^2]$. Suppose that there are totally $n$ true positives $d_i^+$ ranked after $\tilde{d}_j^{-}$ by LearnRisk such that each true positive, $d_i^+$, satisfies $\Delta VaR^{-} - \Delta C^{-} > \epsilon$. Then the probability of Inequality~\ref{equation:fn-flip-prob} fails can be approximated by $[1-(1-\delta)^2]^n$. That is, the probability of at least one of the true positives can support the Inequality~\ref{equation:fn-flip-prob} is $\{1 - [1-(1-\delta)^2]^n\}$. Let $1 - [1-(1-\delta)^2]^n=1-\delta'$, we can get $\delta=1-\sqrt{1-\delta'^{\frac{1}{n}}}$. Therefore, with probability at least $(1-\delta)$,
	\begin{equation}
		\mu_{\tilde{d}_j^{-}} \geq \frac{1}{2} + \frac{\epsilon}{2} - \sqrt{\frac{m+1}{2}ln[\frac{1}{1-(1-\delta^{\frac{1}{n}})^\frac{1}{2}}]}.
	\end{equation}
\end{proof}

Note that the total number of rule risk features ($m$) is usually limited (e.g., dozens or hundreds), while $n$ is usually much larger than $m$. By the exponential effect of $n$, the 3rd term on the right-hand side tends to become zero as the value of $n$ increases.

\subsection{Proof of Lemma~\ref{lemma:fn-deltac}}
\label{appendix:proof-fn-deltac}
\textbf{Lemma~\ref{lemma:fn-deltac}} \emph{
	\begin{equation}
	\begin{split}
	\Delta C^{-} \leq max\big{\{}\mathbb{E}(w_{d_i^{+}}(\hat{\mu}_{d_i^{+}}-2\hat{\sigma}_{d_i^+}))-
	\mathbb{E}(w_{\tilde{d}_j^{-}}(\hat{\mu}_{\tilde{d}_j^{-}}-2\hat{\sigma}_{\tilde{d}_j^{-}})), \mathbb{E}(w_{d_i^{+}}\hat{\mu}_{d_i^{+}}) -\mathbb{E}(w_{\tilde{d}_j^{-}}\hat{\mu}_{\tilde{d}_j^{-}})\big{\}},
	\end{split}
	\end{equation}
	where the $\hat{\mu}_{*}$ and $\hat{\sigma}_{*}$ denote the DNN output probability and its corresponding standard deviation respectively, $w_*$ denotes the learned weight of DNN risk feature.
}

\begin{proof}
For simplicity of presentation, let $N_{d_i^+}$ denote the weight normalization factor of $d_i^{+}$, or $N_{d_i^+}=\displaystyle\sum_{k=1}^{m}w_kZ_k +w_{d_i^+}$. Similarly, let $N_{\tilde{d}_j^{-}}$ denote the weight normalization factor of $\tilde{d}_j^{-}$, or $N_{\tilde{d}_j^{-}}=\displaystyle\sum_{k=1}^{m}w_kZ_k + w_{\tilde{d}_j^{-}}$.
According to the weight function defined by \emph{LearnRisk}~\citep{chen2019towards}, without loss of generality, we suppose that $w_{d_i^+}=w_{\tilde{d}_j^{-}}$. As a result, $N_{d_i^+}=N_{\tilde{d}_j^{-}}$. Based on Assumption~\ref{assumption:identical-risk-feature}, we have,
\begin{equation*}
\begin{split}
&\mathbb{E}(\mu_{d_i^+}-2\sigma_{d_i^+})- \mathbb{E}(\mu_{\tilde{d}_j^{-}}-2\sigma_{\tilde{d}_j^{-}}) \\
=&\mathbb{E}((\mu_{d_i^+}-2\sigma_{d_i^+})-(\mu_{\tilde{d}_j^{-}}-2\sigma_{\tilde{d}_j^{-}})) \\
=&\mathbb{E}\Bigg{(}\frac{1}{{N_{d_i^+}}}\Bigg{(}\displaystyle\sum_{k=1}^{m}w_k\mu_{f_k}Z_k +w_{d_i^+}\hat{\mu}_{d_i^+}-2\sqrt{\displaystyle\sum_{k=1}^{m}w_k^{2}\sigma^2_{f_k}Z_k +w^2_{d_i^+}\hat{\sigma}^2_{d_i^+}}\Bigg{)} - \\
&\frac{1}{{N_{\tilde{d}_j^{-}}}}\Bigg{(}\displaystyle\sum_{k=1}^{m}w_k\mu_{f_k}Z_k+w_{\tilde{d}_j^{-}}\hat{\mu}_{\tilde{d}_j^{-}}-2\sqrt{\displaystyle\sum_{k=1}^{m}w_k^{2}\sigma^2_{f_k}Z_k +w^2_{\tilde{d}_j^{-}}\hat{\sigma}^2_{\tilde{d}_j^{-}}}\Bigg{)}\Bigg{)}\\
 =&\mathbb{E}\Bigg{(}\frac{1}{N_{d_i^+}}\Bigg{(}\displaystyle\sum_{k=1}^{m}w_k\mu_{f_k}Z_k +w_{d_i^+}\hat{\mu}_{d_i^+}-2\sqrt{\displaystyle\sum_{k=1}^{m}w_k^{2}\sigma^2_{f_k}Z_k +w^2_{d_i^+}\hat{\sigma}^2_{d_i^+}}-\\
 & \displaystyle\sum_{k=1}^{m}w_k\mu_{f_k}Z_k-w_{\tilde{d}_j^{-}}\hat{\mu}_{\tilde{d}_j^{-}}+2\sqrt{\displaystyle\sum_{k=1}^{m}w_k^{2}\sigma^2_{f_k}Z_k +w^2_{\tilde{d}_j^{-}}\hat{\sigma}^2_{\tilde{d}_j^{-}}}\Bigg{)}\Bigg{)}\\
=&\mathbb{E}\Bigg{(}\frac{1}{N_{d_i^+}}\Bigg{(}w_{d_i^+}\hat{\mu}_{d_i^+}- w_{\tilde{d}_j^{-}}\hat{\mu}_{\tilde{d}_j^{-}}+ 2\bigg{[}\sqrt{\displaystyle\sum_{k=1}^{m}w_k^{2}\sigma^2_{f_k}Z_k + w^2_{\tilde{d}_j^{-}}\hat{\sigma}^2_{\tilde{d}_j^{-}}} - \sqrt{\displaystyle\sum_{k=1}^{m}w_k^{2}\sigma^2_{f_k}Z_k +w^2_{d_i^+}\hat{\sigma}^2_{d_i^+}} \bigg{]} \Bigg{)}\Bigg{)} \quad\cdots(S_1)\\
\end{split}
\end{equation*}
If $w_{\tilde{d}_j^{-}}\hat{\sigma}_{\tilde{d}_j^{-}} < w_{d_i^+}\hat{\sigma}_{d_i^+}$, then
\begin{equation*}
S_1 \leq \mathbb{E}\big{(}\frac{1}{N_{d_i^+}}\big{(}w_{d_i^+}\hat{\mu}_{d_i^+}- w_{\tilde{d}_j^{-}}\hat{\mu}_{\tilde{d}_j^{-}} \big{)}\big{)}.
\end{equation*}
If  $w_{\tilde{d}_j^{-}}\hat{\sigma}_{\tilde{d}_j^{-}} \geq w_{d_i^+}\hat{\sigma}_{d_i^+}$, then
\begin{equation*}
\begin{split}
S_1 &\leq \mathbb{E}\Bigg{(}\frac{1}{N_{d_i^+}}\Bigg{(}w_{d_i^+}\hat{\mu}_{d_i^+}- w_{\tilde{d}_j^{-}}\hat{\mu}_{\tilde{d}_j^{-}}+
2\bigg{[}\sqrt{w^2_{\tilde{d}_j^{-}}\hat{\sigma}^2_{\tilde{d}_j^{-}}} - \sqrt{w^2_{d_i^+}\hat{\sigma}^2_{d_i^+}} \bigg{]} \Bigg{)}\Bigg{)} \quad\cdots(S_2) \\
&=\mathbb{E}\big{(}\frac{w_{d_i^+}}{N_{d_i^+}}\big{(}\hat{\mu}_{d_i^+} -2\hat{\sigma}_{d_i^+}\big{)}\big{)} - \mathbb{E}\big{(}\frac{w_{\tilde{d}_j^{-}}}{N_{d_i^+}}\big{(}\hat{\mu}_{\tilde{d}_j^{-}} -2\hat{\sigma}_{\tilde{d}_j^{-}}\big{)}\big{)}
\end{split}
\end{equation*}
From step $S_1$ to step $S_2$, we apply the rule that if $a\geq0, b\geq0, c\geq0$ and $b\geq c$, then $\sqrt{a+b}-\sqrt{a+c} \leq \sqrt{b} - \sqrt{c}$. For simplicity of presentation, we denote the normalization of $\frac{w_{d_i^+}}{N_{d_i^+}}$ by $w_{d_i^+}$, and similarly, the normalized $w_{\tilde{d}_j^{-}}$.

Hence, we have
\begin{equation*}
\Delta C^{-} \leq max\big{\{}\mathbb{E}(w_{d_i^{+}}(\hat{\mu}_{d_i^{+}}-2\hat{\sigma}_{d_i^+}))-
\mathbb{E}(w_{\tilde{d}_j^{-}}(\hat{\mu}_{\tilde{d}_j^{-}}-2\hat{\sigma}_{\tilde{d}_j^{-}})), \mathbb{E}(w_{d_i^{+}}\hat{\mu}_{d_i^{+}}) -\mathbb{E}(w_{\tilde{d}_j^{-}}\hat{\mu}_{\tilde{d}_j^{-}})\big{\}},
\end{equation*}
where the $\hat{\mu}_{*}$ and $\hat{\sigma}_{*}$ denote the DNN output probability and its corresponding standard deviation respectively, $w_*$ denotes the learned weight of DNN risk feature.
\end{proof}

%
%

\subsection{Proof of Theorem~\ref{theorem:fp}}
\label{appendix:proof-fp}

Similarly, based on Assumption~\ref{assumption:identical-risk-feature}, we theoretically analyze the chance of a false positive being flipped from \emph{matching} to \emph{unmatching}. We first prove a lemma, and then prove Theorem~\ref{theorem:fp} based on the lemma.
%
%
\begin{lemma}
\label{lemma:fp}
	For a randomly selected pair $d_i^-$ from true negatives, we denote the mean of its equivalence probability by $\mu_{d_i^-}$, and the corresponding standard deviation by $\sigma_{d_i^-}$. For any $\delta \in (0, 1)$, with probability at least $(1-\delta)$, the following inequality holds
	\begin{equation*}
	\mathbb{E}(\mu_{d_i^-}+2\sigma_{d_i^-}) - (\mu_{d_i^-}+2\sigma_{d_i^-}) \leq \varepsilon,
	\end{equation*}
	where $\varepsilon=\sqrt{\frac{m+1}{2}ln(\frac{1}{\delta})}$, $m$ denotes the total number of rule risk features. Similarly, for a randomly selected false positive $\tilde{d}_j^{+}$ with the equivalence probability mean of $\mu_{\tilde{d}_j^{+}}$ and the standard deviation of $\sigma_{\tilde{d}_j^{+}}$, with probability at least $(1-\delta)$, the following inequality holds
	\begin{equation*}
	(\mu_{\tilde{d}_j^{+}}+2\sigma_{\tilde{d}_j^{+}}) - \mathbb{E}(\mu_{\tilde{d}_j^{+}}+2\sigma_{\tilde{d}_j^{+}}) \leq \varepsilon.
	\end{equation*}
\end{lemma}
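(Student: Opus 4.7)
The plan is to mirror the proof of the earlier concentration lemma (for $\mu - 2\sigma$) given in Section A1, but now applied to the upper VaR expression $\mu + 2\sigma$. The proof relies only on the algebraic form of LearnRisk's aggregation formulas together with McDiarmid's inequality, so the structure will carry over with essentially cosmetic changes.

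First, for a randomly drawn true negative $d_i^-$, I would write
\begin{equation*}
  f(Z_1,\ldots,Z_m,\hat{\mu}_{d_i^-}) \;=\; \mu_{d_i^-} + 2\sigma_{d_i^-}
  \;=\; \frac{\sum_{k=1}^{m} w_k \mu_{f_k} Z_k + w_{d_i^-}\hat{\mu}_{d_i^-}}{\sum_{k=1}^{m} w_k Z_k + w_{d_i^-}} + \frac{2\sqrt{\sum_{k=1}^{m} w_k^{2}\sigma_{f_k}^{2} Z_k + w_{d_i^-}^{2}\hat{\sigma}_{d_i^-}^{2}}}{\sum_{k=1}^{m} w_k Z_k + w_{d_i^-}},
\end{equation*}
exposing the $m+1$ independent random arguments (the feature activations $Z_1,\ldots,Z_m$ and the DNN-based argument $\hat{\mu}_{d_i^-}$; recall that $w_{d_i^-}$ and $\hat{\sigma}_{d_i^-}$ are deterministic functions of $\hat{\mu}_{d_i^-}$ under LearnRisk). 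The key observation is the same as before: since every valid equivalence probability lies in $[0,1]$, and $\mu \pm 2\sigma$ is an admissible equivalence probability under LearnRisk's VaR interpretation, we have $0 \leq f \leq 1$ pointwise. Flipping any single $Z_k$ or replacing $\hat{\mu}_{d_i^-}$ by any other value therefore changes $f$ by at most $1$, giving bounded-differences constants $c_1 = \cdots = c_{m+1} = 1$ and $\sum_k c_k^{2} = m+1$.

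Next, I would invoke McDiarmid's inequality in the ``lower tail'' direction to bound the event that $f$ falls far below its expectation:
\begin{equation*}
  \mathbb{P}\bigl(\mathbb{E}(\mu_{d_i^-} + 2\sigma_{d_i^-}) - (\mu_{d_i^-} + 2\sigma_{d_i^-}) \geq \varepsilon\bigr)
  \;\leq\; \exp\!\Bigl(-\tfrac{2\varepsilon^{2}}{m+1}\Bigr).
\end{equation*}
Setting $\delta = \exp(-2\varepsilon^{2}/(m+1))$ and solving for $\varepsilon$ yields $\varepsilon = \sqrt{\tfrac{m+1}{2}\ln(1/\delta)}$, which is exactly the stated bound and holds with probability at least $1-\delta$. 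The argument for the false positive $d_j^{+'}$ is identical except that I apply McDiarmid in the opposite (``upper tail'') direction, since the desired statement bounds how far $\mu_{d_j^{+'}} + 2\sigma_{d_j^{+'}}$ can exceed its expectation; the bounded-differences computation is unchanged because it depends only on the $[0,1]$-range of $f$, not on whether the instance is a true negative or a false positive.

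I do not expect a genuine obstacle: the only step requiring care is the bounded-differences verification, and here it is already guaranteed by the probabilistic interpretation of $\mu \pm 2\sigma$ as a tail of an equivalence-probability distribution confined to $[0,1]$. The only substantive distinction from the $\mu - 2\sigma$ lemma is the choice of tail direction in McDiarmid and the replacement of true positives/false negatives by true negatives/false positives, both of which are purely notational, so the full proof is a near-verbatim transcription of the earlier one.
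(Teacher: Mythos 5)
Your proposal is correct and follows essentially the same route as the paper's own proof: the same explicit representation of $\mu_{d_i^-}+2\sigma_{d_i^-}$ as a function of the $m+1$ random arguments $(Z_1,\ldots,Z_m,\hat{\mu}_{d_i^-})$, the same bounded-differences verification via the $[0,1]$ confinement of $\mu\pm2\sigma$ giving $c_k=1$ and $\sum_k c_k^2=m+1$, and the same application of McDiarmid's inequality in the appropriate tail direction for each of the two cases. No gaps relative to the paper's argument.
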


\begin{proof}
	Consider a randomly selected pair $d_i^-$ from true negatives. The mean of its equivalence probability can be represented by
	\begin{equation*}
	\mu_{d_i^-}=\frac{\displaystyle\sum_{k=1}^{m}w_k\mu_{f_k}Z_k + w_{d_i^-}\hat{\mu}_{d_i^-}}{\displaystyle\sum_{k=1}^{m}w_kZ_k + w_{d_i^-}},
	\end{equation*}
	where $m$ denotes the number of rule risk features, $w_k$ denotes the weight of a risk feature $f_k$, $\mu_{f_k}$ is the equivalence probability mean of the feature $f_k$, $Z_k$ is a random variable indicates if a selected pair has this feature, and $\hat{\mu}_{d_i^-}$ is the output probability by a classifier with its weight $w_{d_i^-}$. Note that for a randomly selected true negative, the values of $w_k$ and $\mu_{f_k}$ for each risk feature are fixed, while $Z_k$, $\hat{\mu}_{d_i^-}$ are random variables. Note that the value of $w_{d_i^-}$ totally depends $\hat{\mu}_{d_i^-}$. 
	
  The standard deviation of its equivalence probability can also be represented by
	\begin{equation*}
	\sigma_{d_i^-}=\frac{1}{\displaystyle\sum_{k=1}^{m}w_kZ_k + w_{d_i^-}}\sqrt{\displaystyle\sum_{k=1}^{m}w_k^{2}\sigma_{f_k}^{2}Z_k+w^2_{d_i^-}\hat{\sigma}^2_{d_i^-}},
	\end{equation*}
	where $\hat{\sigma}^2_{d_i^-}$ denotes the corresponding variance of a classifier's output $\hat{\mu}_{d_i^-}$.
	
	Let $f(Z_1,...,Z_m, \hat{\mu}_{d_i^-})=\mu_{d_i^-}+2\sigma_{d_i^-}$. Now we proceed to consider the bounded differences property of $f$. As in the proof of lemma 1, for all $\mu\geq0, \sigma\geq0$, we have $0\leq\mu\pm 2\sigma\leq1$. Hence, by changing the value of $Z_k$, we have
	\begin{equation*}
	sup|f(z_1,...,z_k,...,z_m, \hat{\mu}_{d_i^-})-f(z_1,...,z'_k,...,z_m, \hat{\mu}_{d_i^-})| \leq 1
	\end{equation*}
	
	Similarly, the upper bound of $f$ by changing the value of $\hat{\mu}_{d_i^-}$ is,
	\begin{equation*}
	sup|f(z_1,...,z_m, \hat{\mu}_{d_i^-})-f(z_1,...,z_m, \hat{\mu}'_{d_i^-})| \leq 1
	\end{equation*}
	
	At this point, we have obtained the upper bounds of function $f(Z_1,...,Z_m, \hat{\mu}_{d_i^-})$ by changing any one of the variables. Denoting these bounds by $c_1, ..., c_m, c_{m+1}$, where $c_k=1, 1\leq k \leq m+1$, we have
	\begin{equation*}
	\displaystyle\sum_{k=1}^{m+1}c_k^2 = m+1.
	\end{equation*}
	
	By applying the McDiarmid's inequality, for all $\varepsilon > 0$, we have
	\begin{equation*}
	\mathbb{P}(\mathbb{E}(\mu_{d_i^-}+2\sigma_{d_i^-}) - (\mu_{d_i^-}+2\sigma_{d_i^-}) \geq \varepsilon)\leq exp(-\frac{2\varepsilon^2}{\sum_{k=1}^{m+1}c_k^{2}}) = exp(-\frac{2\varepsilon^2}{m+1}).
	\end{equation*}
	
	Let $\delta=exp(-\frac{2\varepsilon^2}{m+1})$, we can get that $\varepsilon=\sqrt{\frac{m+1}{2}ln(\frac{1}{\delta})}$. Hence, with the probability at least $1-\delta$, the inequality $\mathbb{E}(\mu_{d_i^-}+2\sigma_{d_i^-}) - (\mu_{d_i^-}+2\sigma_{d_i^-}) \leq \varepsilon$ holds.
	
	Similarly,  with the probability at least $1-\delta$, we have $(\mu_{\tilde{d}_j^{+}}+2\sigma_{\tilde{d}_j^{+}}) - \mathbb{E}(\mu_{\tilde{d}_j^{+}}+2\sigma_{\tilde{d}_j^{+}}) \leq \varepsilon$.
\end{proof}

%
%
	\noindent \textbf{Theorem~\ref{theorem:fp}} \emph{
	Given a false positive $\tilde{d}_j^{+}$, suppose that there are totally $n$ true negatives, denoted by $d_i^-$, ranked after $\tilde{d}_j^{+}$ by LearnRisk such that each true negative, $d_i^-$, satisfies 
	\begin{equation}	
	\Delta VaR^{+} - \Delta C^{+} > \epsilon, 
	\end{equation} 		
	in which $\Delta VaR^{+}=VaR^+(\tilde{d}_j^{+}) - VaR^-(d_i^{-})$, and $\Delta C^{+} = \mathbb{E}(\mu_{{\tilde{d}_j^{+}}}+2\sigma_{\tilde{d}_j^{+}})- \mathbb{E}(\mu_{d_i^{-}}+2\sigma_{d_i^{-}})$. Then, for any $\delta \in (0, 1)$, with probability at least $1-\delta$, its expectation of equivalence probability of $\tilde{d}_j^{+}$, $\mu_{\tilde{d}_j^{+}}$ estimated by \emph{LearnRisk}, satisfies 
	\begin{equation*}
	\mu_{\tilde{d}_j^{+}} \leq \frac{1}{2} - \frac{\epsilon}{2} + \sqrt{\frac{m+1}{2}ln[\frac{1}{1-(1-\delta^{\frac{1}{n}})^{\frac{1}{2}}}]},
	\end{equation*}
	in which $\mu_{*}$ denotes the mean of equivalence probability and $\sigma_{*}$ denotes its standard deviation.}

\begin{proof}
	With Lemma~\ref{lemma:fp}, with probability at least $(1-\delta)^2$, the following inequalities hold 
	\begin{equation*}
	\begin{split}
	&(\mu_{d_i^-}+2\sigma_{d_i^-})-\mathbb{E}(\mu_{d_i^-}+2\sigma_{d_i^-})+\mathbb{E}(\mu_{\tilde{d}_j^{+}}+2\sigma_{\tilde{d}_j^{+}}) - (\mu_{\tilde{d}_j^{+}}+2\sigma_{\tilde{d}_j^{+}}) \geq -2\varepsilon; \\
	&(\mu_{d_i^-}+2\sigma_{d_i^-}-\mu_{\tilde{d}_j^{+}}-2\sigma_{\tilde{d}_j^{+}})+\mathbb{E}(\mu_{\tilde{d}_j^{+}}+2\sigma_{\tilde{d}_j^{+}})-\mathbb{E}(\mu_{d_i^-}+2\sigma_{d_i^-}) \geq -2\varepsilon;
	\end{split}
	\end{equation*}
Hence, we have	
	\begin{equation}
	\label{equation:fp-inequality}
	\mu_{d_i^-}+2\sigma_{d_i^-}-\mu_{\tilde{d}_j^{+}}-2\sigma_{\tilde{d}_j^{+}} \geq -2\varepsilon-\big{[}\mathbb{E}(\mu_{\tilde{d}_j^{+}}+2\sigma_{\tilde{d}_j^{+}})- \mathbb{E}(\mu_{d_i^-}+2\sigma_{d_i^-})\big{]},
	\end{equation}
	where $\varepsilon=\sqrt{\frac{m+1}{2}ln(\frac{1}{\delta})}$. We denote 
\begin{equation}
\Delta C^{+} = \mathbb{E}(\mu_{\tilde{d}_j^{+}}+2\sigma_{\tilde{d}_j^{+}})- \mathbb{E}(\mu_{d_i^-}+2\sigma_{d_i^-}),
\end{equation}
 where $\mathbb{E}(\mu_{\tilde{d}_j^{+}}+2\sigma_{\tilde{d}_j^{+}})$ is the risk expectation of false positives being labeled as \emph{unmatching} and $\mathbb{E}(\mu_{d_i^-}+2\sigma_{d_i^-})$ is the risk expectation of true negatives being labeled as \emph{unmatching}.
	Based on the definition of VaR, we have $VaR^+(\tilde{d}_j^{+})=1-(\mu_{\tilde{d}_j^{+}} - 2\sigma_{\tilde{d}_j^{+}})$, and $VaR^-(d_i^-)=\mu_{d_i^-}+2\sigma_{d_i^-}$. Denoting $\Delta VaR^{+} = VaR^+(\tilde{d}_j^{+}) - VaR^-(d_i^-)$, we have
	\begin{equation}
	\label{equation:fp-deltavar}
	\begin{split}
	1 - \Delta VaR^{+} =& \mu_{d_i^-} + 2\sigma_{d_i^-} + \mu_{\tilde{d}_j^{+}}-2\sigma_{\tilde{d}_j^{+}} \\
	\geq & \mu_{\tilde{d}_j^{+}} + \mu_{\tilde{d}_j^{+}} - 2\varepsilon-\big{[}\mathbb{E}(\mu_{\tilde{d}_j^{+}}+2\sigma_{\tilde{d}_j^{+}})- \mathbb{E}(\mu_{d_i^-}+2\sigma_{d_i^-})\big{]} \\
	= & 2\mu_{\tilde{d}_j^{+}} - 2\varepsilon - \Delta C^{+}.
	\end{split}
	\end{equation} 
	In Equation~\ref{equation:fp-deltavar}, the inequality is obtained by applying the Inequality~\ref{equation:fp-inequality}. 
	Hence, for a randomly selected false positive and a randomly selected true negative, with probability at least $(1-\delta)^2$, the following inequality holds
	\begin{equation}
	\label{equation:fp-flip-prob}
	\mu_{\tilde{d}_j^{+}} \leq \frac{1}{2} - \frac{\Delta VaR^{+}}{2} + \sqrt{\frac{m+1}{2}ln(\frac{1}{\delta})} + \frac{\Delta C^{+}}{2}.
	\end{equation}
	
	Note that the probability of the above inequality does not hold is $[1-(1-\delta)^2]$. Suppose that there are totally $n$ true negatives, denoted by $d_i^-$, ranked after $\tilde{d}_j^{+}$ by LearnRisk such that each true negative, $d_i^-$, satisfies $\Delta VaR^{+} - \Delta C^{+} > \epsilon$. Then the probability of Inequality~\ref{equation:fp-flip-prob} fails can be approximated by $[1-(1-\delta)^2]^n$. That is, the probability of at least one of the true negatives can support the Inequality~\ref{equation:fp-flip-prob} is $\{1 - [1-(1-\delta)^2]^n\}$. Let $1 - [1-(1-\delta)^2]^n=1-\delta'$, we can get $\delta=1-\sqrt{1-\delta'^{\frac{1}{n}}}$. Therefore, with probability at least $(1-\delta)$, we have
	\begin{equation}
	\mu_{\tilde{d}_j^{+}} \leq \frac{1}{2} - \frac{\epsilon}{2} + \sqrt{\frac{m+1}{2}ln[\frac{1}{1-(1-\delta^{\frac{1}{n}})^{\frac{1}{2}}}]},
	\end{equation}
\end{proof}

\subsection{Proof of Lemma~\ref{lemma:fp-deltac}}
\label{appendix:proof-fp-deltac}

%
%

\textbf{Lemma~\ref{lemma:fp-deltac}} \emph{
	\begin{equation}
	\begin{split}
	\Delta C^{+} \leq max\big{\{}\mathbb{E}(w_{\tilde{d}_j^{+}}(\hat{\mu}_{\tilde{d}_j^{+}}+2\hat{\sigma}_{\tilde{d}_j^{+}}))-
	\mathbb{E}(w_{d_i^-}(\hat{\mu}_{d_i^-}+2\hat{\sigma}_{d_i^-})), \mathbb{E}(w_{\tilde{d}_j^{+}}\hat{\mu}_{\tilde{d}_j^{+}}) -\mathbb{E}(w_{d_i^-}\hat{\mu}_{d_i^-})\big{\}},
	\end{split}
	\end{equation}
	where the $\hat{\mu}_{*}$ and $\hat{\sigma}_{*}$ denote the DNN output probability and its corresponding standard deviation respectively, $w_*$ denotes the learned weight of DNN risk feature.}

\begin{proof}
For simplicity of presentation, let $N_{\tilde{d}_j^{+}}$ denote the weight normalization factor of $\tilde{d}_j^{+}$, $N_{\tilde{d}_j^{+}}=\displaystyle\sum_{k=1}^{m}w_k\mu_{f_k}Z_k+w_{\tilde{d}_j^{+}}$. Similarly, $N_{d_i^-}$ denote the weight normalization factor of $d_i^-$, $N_{d_i^-}=\displaystyle\sum_{k=1}^{m}w_k\mu_{f_k}Z_k+w_{d_i^-}$. As in the proof of Lemma 1, we suppose that $N_{\tilde{d}_j^{+}}=N_{d_i^-}$. Based on Assumption~\ref{assumption:identical-risk-feature}, we have,
\begin{equation*}
\begin{split}
&\mathbb{E}(\mu_{\tilde{d}_j^{+}}+2\sigma_{\tilde{d}_j^{+}})- \mathbb{E}(\mu_{d_i^-}+2\sigma_{d_i^-}) \\
=&\mathbb{E}((\mu_{\tilde{d}_j^{+}}+2\sigma_{\tilde{d}_j^{+}})-(\mu_{d_i^-}+2\sigma_{d_i^-})) \\
=&\mathbb{E}\Bigg{(}\frac{1}{N_{\tilde{d}_j^{+}}}\Bigg{(}\displaystyle\sum_{k=1}^{m}w_k\mu_{f_k}Z_k+w_{\tilde{d}_j^{+}}\hat{\mu}_{\tilde{d}_j^{+}}+2\sqrt{\displaystyle\sum_{k=1}^{m}w_k^{2}\sigma^2_{f_k}Z_k+w^2_{\tilde{d}_j^{+}}\hat{\sigma}^2_{\tilde{d}_j^{+}}}\Bigg{)} - \\
&\frac{1}{N_{d_i^-}}\Bigg{(}\displaystyle\sum_{k=1}^{m}w_k\mu_{f_k}Z_k+w_{d_i^-}\hat{\mu}_{d_i^-}+2\sqrt{\displaystyle\sum_{k=1}^{m}w_k^{2}\sigma^2_{f_k}Z_k+w^2_{d_i^-}\hat{\sigma}^2_{d_i^-}}\Bigg{)}\Bigg{)} \\
=&\mathbb{E}\Bigg{(}\frac{1}{N_{\tilde{d}_j^{+}}}\Bigg{(}\displaystyle\sum_{k=1}^{m}w_k\mu_{f_k}Z_k+w_{\tilde{d}_j^{+}}\hat{\mu}_{\tilde{d}_j^{+}}+2\sqrt{\displaystyle\sum_{k=1}^{m}w_k^{2}\sigma^2_{f_k}Z_k+w^2_{\tilde{d}_j^{+}}\hat{\sigma}^2_{\tilde{d}_j^{+}}} - \\ &\displaystyle\sum_{k=1}^{m}w_k\mu_{f_k}Z_k-w_{d_i^-}\hat{\mu}_{d_i^-}-2\sqrt{\displaystyle\sum_{k=1}^{m}w_k^{2}\sigma^2_{f_k}Z_k+w^2_{d_i^-}\hat{\sigma}^2_{d_i^-}}\Bigg{)}\Bigg{)} \\
=&\mathbb{E}\Bigg{(}\frac{1}{N_{\tilde{d}_j^{+}}}\Bigg{(}w_{\tilde{d}_j^{+}}\hat{\mu}_{\tilde{d}_j^{+}}-w_{d_i^-}\hat{\mu}_{d_i^-}+2\bigg{[}\sqrt{\displaystyle\sum_{k=1}^{m}w_k^{2}\sigma^2_{f_k}Z_k+w^2_{\tilde{d}_j^{+}}\hat{\sigma}^2_{\tilde{d}_j^{+}}} - \sqrt{\displaystyle\sum_{k=1}^{m}w_k^{2}\sigma^2_{f_k}Z_k+w^2_{d_i^-}\hat{\sigma}^2_{d_i^-}} \bigg{]} \Bigg{)}\Bigg{)} \quad\cdots(S_3)\\
\end{split}
\end{equation*}
If $w_{\tilde{d}_j^{+}}\hat{\sigma}_{\tilde{d}_j^{+}} < w_{d_i^-}\hat{\sigma}_{d_i^-}$, then
\begin{equation*}
S_3 \leq \mathbb{E}\big{(}\frac{1}{N_{\tilde{d}_j^{+}}}\big{(}w_{\tilde{d}_j^{+}}\hat{\mu}_{\tilde{d}_j^{+}}- w_{d_i^-}\hat{\mu}_{d_i^-} \big{)}\big{)}.
\end{equation*}
If  $w_{\tilde{d}_j^{+}}\hat{\sigma}_{\tilde{d}_j^{+}} \geq w_{d_i^-}\hat{\sigma}_{d_i^-}$, then
\begin{equation*}
\begin{split}
S_3 &\leq \mathbb{E}\Bigg{(}\frac{1}{N_{\tilde{d}_j^{+}}}\Bigg{(}w_{\tilde{d}_j^{+}}\hat{\mu}_{\tilde{d}_j^{+}}- w_{d_i^-}\hat{\mu}_{d_i^-}+
2\bigg{[}\sqrt{w^2_{\tilde{d}_j^{+}}\hat{\sigma}^2_{\tilde{d}_j^{+}}} - \sqrt{w^2_{d_i^-}\hat{\sigma}^2_{d_i^-}} \bigg{]} \Bigg{)} \Bigg{)} \quad\cdots(S_4) \\
&=\mathbb{E}\big{(}\frac{w_{\tilde{d}_j^{+}}}{N_{\tilde{d}_j^{+}}}\big{(}\hat{\mu}_{\tilde{d}_j^{+}} +2\hat{\sigma}_{\tilde{d}_j^{+}}\big{)}\big{)} - \mathbb{E}\big{(}\frac{w_{d_i^-}}{N_{\tilde{d}_j^{+}}}\big{(}\hat{\mu}_{d_i^-} +2\hat{\sigma}_{d_i^-})\big{)}\big{)}
\end{split}
\end{equation*}
From step $S_3$ to step $S_4$, we apply the rule that if $a\geq0, b\geq0, c\geq0$ and $b\geq c$, then $\sqrt{a+b}-\sqrt{a+c} \leq \sqrt{b} - \sqrt{c}$. For simplicity of presentation, we denote the normalization of $\frac{w_{\tilde{d}_j^{+}}}{N_{\tilde{d}_j^{+}}}$ by $w_{\tilde{d}_j^{+}}$, and similarly, the normalized $w_{d_i^-}$. 
Hence, we have
\begin{equation*}
\Delta C^{+} \leq max\big{\{}\mathbb{E}(w_{\tilde{d}_j^{+}}(\hat{\mu}_{\tilde{d}_j^{+}}+2\hat{\sigma}_{\tilde{d}_j^{+}}))-
\mathbb{E}(w_{d_i^-}(\hat{\mu}_{d_i^-}+2\hat{\sigma}_{d_i^-})), \mathbb{E}(w_{\tilde{d}_j^{+}}\hat{\mu}_{\tilde{d}_j^{+}}) -\mathbb{E}(w_{d_i^-}\hat{\mu}_{d_i^-})\big{\}},
\end{equation*}
where the $\hat{\mu}_{*}$ and $\hat{\sigma}_{*}$ denote the DNN output probability and its corresponding standard deviation respectively, $w_*$ denotes the learned weight of DNN risk feature.
\end{proof}

	\vskip 0.2in
	\bibliography{reference}

\end{document}